\newcommand{\phat}{\hat{p}}
\newcommand{\pbar}{\bar{p}}
\newcommand{\PKernelbar}{\bar{\mathcal{P}} }
\newcommand{\PKernelpibar}{\PKernelbar^{\pi} }
\newcommand{\epsmodel}{\epsilon_{\mathrm{Model}}}
\newcommand{\epsquery}{\epsilon_{\mathrm{Query}}}
\newcommand{\wmax}{w_{\mathrm{max}}}
\newcommand{\piz}{{\pi_\text{PE}}}
\newcommand{\PKernelpiz}{{\PKernel^\piz}}
\newcommand{\PKernelpizbar}{{\PKernelbar^\piz}}
\newcommand{\addtospan}{\mathrm{BasisCreation}}
\newcommand{\MoCoPE}{{\mathrm{MoCo}_\beta^\piz}}
\newcommand{\MoCoControl}{{\mathrm{MoCo}_\beta^*}}
\newcommand{\Vtarget}{{V^\mathrm{target}}}
\newcommand{\TVrho}{\mathrm{TV}_\rho}
\newcommand{\Lfournorm}[1]{ \norm{#1}_{4, \rho} }
\newcommand{\Lonenorm}[1]{ \norm{#1}_{1, \rho} }
\newcommand{\DeltaPpi}{{\Delta \PKernel}}
\newcommand{\etapi}{{\eta^\pi}}
\newcommand{\Biginfnorm}[1]{{\Big\lVert #1 \Big\rVert_\infty}}
\newcommand{\PKernel}{\mathcal{P}}
\newcommand{\PKernelpi}{\PKernel^{\pi} }
\newcommand{\RKernel}{\mathcal{R}}
\newcommand{\PKernelhat}{\hat{\PKernel} }
\newcommand{\PKernelpihat}{\hat{\PKernel}^{\pi} }
\newcommand{\pigreedy}{{\pi}_{g}}
\newcommand{\XX}{{\mathcal{X}}}
\renewcommand{\AA}{{\mathcal{A}}}
\newcommand{\ZZ}{{\mathcal{Z}}}
\newcommand{\RR}{\mathcal{R}}
\newcommand{\XA}{\XX\times\AA}
\newcommand{\Vpi}{V^\pi}
\newcommand{\Vopt}{V^*}
\newcommand{\dx}{\mathrm{d}x}
\newcommand{\dy}{\mathrm{d}y}
\newcommand{\da}{\mathrm{d}a}
\newcommand{\dz}{\mathrm{d}z}
\newcommand{\smallnorm}[1]{\Vert#1\Vert}
\newcommand{\MM}{\mathcal{M}}
\newcommand{\ra}{\rightarrow}
\newcommand{\Real}{\mathbb R}
\def\vecpsi{{\boldsymbol{\psi}}}
\def\vecphi{{\boldsymbol{\phi}}}
\def\balign#1\ealign{\begin{align}#1\end{align}}
\def\baligns#1\ealigns{\begin{align*}#1\end{align*}}
\def\balignat#1\ealign{\begin{alignat}#1\end{alignat}}
\def\balignats#1\ealigns{\begin{alignat*}#1\end{alignat*}}
\def\bitemize#1\eitemize{\begin{itemize}#1\end{itemize}}
\def\benumerate#1\eenumerate{\begin{enumerate}#1\end{enumerate}}
\newenvironment{talign*}
 {\csname align*\endcsname}
 {\endalign}
\newenvironment{talign}
 {\csname align\endcsname}
 {\endalign}
\def\balignst#1\ealignst{\begin{talign*}#1\end{talign*}}
\def\balignt#1\ealignt{\begin{talign}#1\end{talign}}
\let\originalleft\left
\let\originalright\right
\renewcommand{\left}{\mathopen{}\mathclose\bgroup\originalleft}
\renewcommand{\right}{\aftergroup\egroup\originalright}
\def\tinycitep*#1{{\tiny\citep*{#1}}}
\def\tinycitealt*#1{{\tiny\citealt*{#1}}}
\def\tinycite*#1{{\tiny\cite*{#1}}}
\def\smallcitep*#1{{\scriptsize\citep*{#1}}}
\def\smallcitealt*#1{{\scriptsize\citealt*{#1}}}
\def\smallcite*#1{{\scriptsize\cite*{#1}}}
\def\mbf#1{\mathbf{#1}}
\def\mbb#1{\mathbb{#1}}
\def\mrm#1{\mathrm{#1}}
\def\reals{\mathbb{R}} 
\def\<{\left\langle} 
\def\>{\right\rangle}
\def\defeq{\triangleq} 
\newcommand{\ident}{\mbf{I}} 
\def\norm#1{\left\|{#1}\right\|} 
\newcommand{\infnorm}[1]{\norm{#1}_{\infty}} 
\newcommand{\inner}[1]{{\langle #1 \rangle}} 
\newcommand{\vecle}{\preccurlyeq}
\newcommand{\vecge}{\succcurlyeq}
\def\E{\mbb{E}} 
\def\Earg#1{\E\left[{#1}\right]}
\def\Esubarg#1#2{\E_{#1}\left[{#2}\right]}
\def\bigO#1{\mathcal{O}\left(#1\right)} 
\def\T{\top} 
\def\Var{\mrm{Var}} 
\renewcommand{\exp}[1]{\operatorname{exp}\left(#1\right)} 
\newcommand{\grad}{\nabla}
\newcommand{\deriv}[2]{\frac{\mathrm{d} #1}{\mathrm{d} #2}} 
\newcommand{\pderiv}[2]{\frac{\partial #1}{\partial #2}} 
\def\KL#1#2{D_\textnormal{KL}(\;{#1}\;\Vert\;{#2}\;)}
\providecommand{\argmax}{\mathop\mathrm{arg max}} 
\providecommand{\argmin}{\mathop\mathrm{arg min}}
\newtheorem{theorem}{Theorem}
\newtheorem{lemma}{Lemma}
\newtheorem{proposition}{Proposition}
\renewenvironment{proof}{\noindent\textbf{Proof.}\hspace*{.3em}}{\qed\\}
\newenvironment{proof-sketch}{\noindent\textbf{Proof Sketch}
  \hspace*{1em}}{\qed\bigskip\\}
\newenvironment{proof-idea}{\noindent\textbf{Proof Idea}
  \hspace*{1em}}{\qed\bigskip\\}
\newenvironment{proof-of-lemma}[1][{}]{\noindent\textbf{Proof of Lemma {#1}}
  \hspace*{1em}}{\qed\\}
\newenvironment{proof-of-theorem}[1][{}]{\noindent\textbf{Proof of Theorem {#1}}
  \hspace*{1em}}{\qed\\}
\newenvironment{proof-attempt}{\noindent\textbf{Proof Attempt}
  \hspace*{1em}}{\qed\bigskip\\}
\numberwithin{equation}{section}
\newcommand{\abs}[1]{\left|#1\right|}
\newcommand{\handout}[5]{
  \noindent
  \begin{center}
    \framebox{
      \vbox{
        \hbox to 5.78in { {\bf \title } \hfill #2 }
        \vspace{4mm}
        \hbox to 5.78in { {\Large \hfill #5  \hfill} }
        \vspace{2mm}
        \hbox to 5.78in { {\em #3 \hfill #4} }
      }
    }
  \end{center}
  \vspace*{4mm}
}
\newif\ifSupp
\title{Maximum Entropy Model Correction \\ in Reinforcement Learning}
\author{
	Amin Rakhsha$^{1,2}$,
	Mete Kemertas$^{1,2}$,
	Mohammad Ghavamzadeh$^{3}$,
	Amir-massoud Farahmand$^{1,2}$\\
	$^1$Department of Computer Science, University of Toronto, $^2$Vector Institute,
	$^3$Amazon 
	\\ \texttt{\{aminr,kemertas,farahmand\}@cs.toronto.edu, ghavamza@amazon.com}
	\vspace{-1em}
}
\begin{document}

	\maketitle

    \begin{abstract}
		We propose and theoretically analyze an approach for planning with an approximate model in reinforcement learning that can reduce the adverse impact of model error. If the model is accurate enough, it accelerates the convergence to the true value function too. 
		One of its key components is the MaxEnt Model Correction (MoCo) procedure that corrects the model’s next-state distributions based on a Maximum Entropy density estimation formulation. Based on MoCo, we introduce the Model Correcting Value Iteration (MoCoVI) algorithm, and its sampled-based variant MoCoDyna. We show that MoCoVI and MoCoDyna’s convergence can be much faster than the conventional model-free algorithms. Unlike traditional model-based algorithms, MoCoVI and MoCoDyna effectively utilize an approximate model and still converge to the correct value function.
        
	\end{abstract}

	\section{Introduction}
	\label{sec:Introduction}

Reinforcement learning (RL) algorithms can be divided into model-free and model-based algorithms based on how they use samples from the environment with dynamics $\PKernel$. 
Model-free algorithms directly use samples for $\PKernel$ to approximately apply the Bellman operator on value functions. At its core, the \textit{next-state expectations} $\Esubarg{X' \sim \PKernel(\cdot | x, a)}{\phi(X')}$ is estimated for a function $\phi$, such as the value function $V$, at a state-action pair $(x,a)$.
Model-based reinforcement learning (MBRL) algorithms, on the other hand, use samples from the environment to train a world model $\PKernelhat$ to approximate $\PKernel$. The world model $\PKernelhat$ can be considered an approximate but cheap substitute of the true dynamics $\PKernel$, and is used to solve the task instead of $\PKernel$.

The world model $\PKernelhat$ often cannot be learned perfectly, and some inaccuracies between $\PKernel$ and  $\PKernelhat$ is inevitable. This error in the model can catastrophically hinder the performance of an MBRL agent, especially in complex environments that learning an accurate model is challenging \citep{Talvite2017, jafferjee2020hallucinating, abbas2020}. In some of these challenging environments, estimating the next-state expectations accurately might be much easier than learning a model. 
Motivated by this scenario, we aim to bridge the gap between model-based and model-free algorithms and ask: \textit{Can we improve MBRL algorithms by using both the next-state expectations and the approximate model $\PKernelhat$?}

In this paper, we consider a discounted MDP with the true dynamics $\PKernel$, and we suppose that we have access to an approximate model $\PKernelhat \approx \PKernel$. At this level of abstraction, we do not care about how $\PKernelhat$ is obtained -- it may be learned using a conventional Maximum Likelihood Estimate (MLE) or it might be a low-fidelity and fast simulator of the true dynamics $\PKernel$. We further assume that for any function $\phi$ of states, we can obtain the next-state expectations $\Esubarg{X' \sim \PKernel(\cdot | x, a)}{\phi(X')}$ for all states $x$ and actions $a$. We consider this procedure costly compared to ones involving $\PKernelhat$ which will be considered free.

We propose the MaxEnt Model Correction (MaxEnt MoCo) algorithm that can be implemented with any planning algorithm that would normally be used for planning and reduce the impact of model error. MaxEnt MoCo first obtains $\Esubarg{X' \sim \PKernel(\cdot | x, a)}{\phi_i(X')}$ for all $(x,a)$ and a set of \textit{basis} functions $\phi_i$. The main idea is that whenever the planning algorithm normally uses $\PKernelhat(\cdot | x,a)$ for some state-action $(x,a)$, a corrected distribution $\bar p$ is calculated and used instead. The distribution $\bar p$ is obtained by minimally modifying $\PKernelhat(\cdot | x,a)$ so that the next-state expectations $\Esubarg{X' \sim \bar p}{\phi_i(X')}$ based on $\bar p$ are (more) consistent with the obtained $\Esubarg{X' \sim \PKernel(\cdot | x, a)}{\phi_i(X')}$ through queries. This procedure is known as Maximum Entropy density estimation \citep{dudik2007maximum} -- hence the name MaxEnt MoCo. We show that if the true value function can be well-approximated by a linear combination of the basis functions $\phi_i$, the estimated value function by MaxEnt MoCo can be significantly more accurate than the normally computed  one using $\PKernelhat$.

We also introduce Model Correcting Value Iteration (MoCoVI) (Section~\ref{sec:mocovi}) and its sample-based variant MoCoDyna (Section~\ref{sec:mocodyna}), which iteratively update the basis functions $\phi_i$. These algorithms select their past value functions as the basis functions, and execute MaxEnt MoCo to get a new, more accurate value function. This choice of basis functions proves to be effective. We show that if the model is accurate enough, MoCoVI and MoCoDyna can converge to the true value function, and the convergence can be much faster than a model-free algorithm that doesn't have access to a model. In this paper, we study the theoretical underpinnings of maximum entropy model correction in RL. We provide theoretical analysis that applies to both finite and continuous MDPs, and to the approximate versions of the algorithms with function approximation.

	\section{Background}
	\label{sec:background}

In this work, we consider a discounted Markov Decision Process (MDP) defined as ${M = (\XX, \AA, \RKernel, \PKernel, \gamma)}$~\citep{SzepesvariBook10}. We use commonly used definitions and notations, summarized in Appendix~\ref{sec:Appendix-Background-MDP}. We briefly mention that we denote the value of a policy $\pi$ by $\Vpi$ and the optimal value function by $\Vopt$. Whenever we need to be explicit about the dependence of the value functions to reward kernel $\RKernel$ and the transition kernel $\PKernel$, we use ${V^\pi = V^\pi(\RKernel, \PKernel)}$ and $V^* = V^*(\RKernel, \PKernel)$.
For any function $\phi \colon \XX \to \reals$, we define $\PKernel \phi \colon \XX \times \AA \to \reals$ as
$
    (\PKernel \phi)(x, a) \defeq \int \PKernel(\dx' | x,a) \phi(x')$ for all $(x,a) \in \XA$.
We refer to the problem of finding $V^\piz$ for a specific policy $\piz$ as the \emph{Policy Evaluation (PE)} problem, and to the problem of finding an optimal policy as the \emph{Control} problem.
In this paper, we assume an approximate model $\PKernelhat \approx \PKernel$ is given. We define $\hat V^\pi$ and $\hat \pi^*$ in the approximate MDP $\hat M = (\XX, \AA, \RKernel, \PKernelhat, \gamma)$ similar to their counterparts in the true MDP $M$. We assume the PE and control problems can be solved in $\hat M$ as it is a standard part of MBRL algorithms.

\subsection{Impact of model error}
\label{sec:background.mbrl}
In MBRL, the agent relies on the approximate model $\PKernelhat$ to solve the PE and Control problems \citep{Sutton1990}. A purely MBRL agent learns value functions and policies only using $\PKernelhat$, which means it effectively solves the approximate MDP ${\hat M = (\XX, \AA, \RKernel, \PKernelhat, \gamma)}$ instead of the true MDP $M$. The advantage is that this only requires access to a cost-efficient $\PKernelhat$, hence avoiding costly access to the true dynamics $\PKernel$ (e.g., via real-world interaction). However, the model error can dramatically degrade the agent's performance \citep{Talvite2017, jafferjee2020hallucinating, abbas2020}. 
The extent of the performance loss has been theoretically analyzed in prior work \citep{AvilaPiresSzepesvari2016, Talvite2017, FarahmandVAML2017, Farahmand2018}.
To characterize model errors and their impact mathematically, we define the following error measure for each state-action pair $(x, a)$:
\begin{align}
\label{eq:eps-model}
		\epsmodel(x, a) = \sqrt{\KL{\PKernel(\cdot | x, a)}{\PKernelhat(\cdot | x, a)}}.
\end{align}	
We note that the choice of KL divergence for quantifying the model error is a natural one. Indeed, in conventional model learning (see e.g.,~\citealt{janner2019}), a common choice of optimization objective is the maximum likelihood estimation (MLE) loss, which minimizes the empirical estimate of the KL-divergence of the approximate next-state distribution to the ground-truth. The following lemma provides performance guarantees for an MBRL agent as a function of $\epsmodel$. Similar bounds have appeared in recent work \citep{AvilaPiresSzepesvari2016, Farahmand2018, rakhsha2022operator}.

\begin{lemma}
	\label{lemma:pure-mbrl}
Suppose that $\PKernel$ is the true environment dynamics, $\PKernelhat$ is an approximation of $\PKernel$, and $\norm{\epsmodel}_\infty = \sup_{x, a \in \XX \times \AA} \epsmodel(x, a)$ is the worst-case error between them. Let $c_1 = \gamma \sqrt{2} / (1 - \gamma)$. We have
$\smallnorm{V^\piz - \hat V^\piz}_\infty 
\le \frac{\gamma}{1 - \gamma} \smallnorm{(\PKernel^\piz - \PKernelhat^\piz) V^\piz}_\infty
\le c_1 \infnorm{\epsmodel} \cdot \norm{ V^\piz}_\infty$
and
$\smallnorm{V^* - V^{\hat \pi^*}}_\infty 
\le \frac{2c_1 \infnorm{\epsmodel}}{1 - c_1 \infnorm{\epsmodel}} \smallnorm{ V^*}_\infty$.
\end{lemma}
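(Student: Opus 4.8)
The plan is to prove the two displayed chains separately, and within the policy-evaluation bound to establish the middle inequality from a fixed-point identity and the rightmost one from Pinsker's inequality. Starting from the Bellman equations $V^\piz = r^\piz + \gamma\PKernel^\piz V^\piz$ and $\hat V^\piz = r^\piz + \gamma\PKernelhat^\piz \hat V^\piz$ (both MDPs share the reward kernel $\RKernel$, so $r^\piz$ cancels), subtracting and inserting $\pm\,\gamma\PKernelhat^\piz V^\piz$ gives $(I-\gamma\PKernelhat^\piz)(V^\piz-\hat V^\piz)=\gamma(\PKernel^\piz-\PKernelhat^\piz)V^\piz$. Because $\PKernelhat^\piz$ is a stochastic kernel, the Neumann series $(I-\gamma\PKernelhat^\piz)^{-1}=\sum_{k\ge0}(\gamma\PKernelhat^\piz)^k$ converges with $L_\infty$ operator norm at most $1/(1-\gamma)$, which yields the middle inequality. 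For the rightmost inequality I would bound the integral operator pointwise: $((\PKernel-\PKernelhat)\phi)(x,a)$ is $\phi$ integrated against the signed measure $\PKernel(\cdot\mid x,a)-\PKernelhat(\cdot\mid x,a)$, so \Holder bounds it by $2\infnorm{\phi}\,\mathrm{TV}$, and Pinsker's inequality turns $\mathrm{TV}$ into $\sqrt{\tfrac12\mathrm{KL}}$, producing $|((\PKernel-\PKernelhat)\phi)(x,a)|\le\sqrt2\,\infnorm{\phi}\,\epsmodel(x,a)$. Setting $\phi=V^\piz$ and taking suprema delivers the factor $c_1\infnorm{\epsmodel}\infnorm{V^\piz}$.

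For the control bound, the key idea is the decomposition $\infnorm{V^*-V^{\hat\pi^*}}\le\infnorm{V^*-\hat V^*}+\infnorm{\hat V^*-V^{\hat\pi^*}}$, where $\hat V^*=\hat V^{\hat\pi^*}$ is the optimal value in $\hat M$, and to bound \emph{both} terms by $\infnorm{\hat V^*}$ rather than $\infnorm{V^*}$. For the first term I would use that the Bellman optimality operators $T^*$ (for $M$) and $\hat T^*$ (for $\hat M$) are $\gamma$-contractions in $\infnorm{\cdot}$: writing $V^*-\hat V^*=(T^*V^*-T^*\hat V^*)+(T^*\hat V^*-\hat T^*\hat V^*)$ and using $|\max_a f(a)-\max_a g(a)|\le\max_a|f(a)-g(a)|$ together with the same Pinsker estimate at $\phi=\hat V^*$, the contraction term moves to the left to give $\infnorm{V^*-\hat V^*}\le c_1\infnorm{\epsmodel}\infnorm{\hat V^*}$. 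For the second term, since $\hat V^*$ and $V^{\hat\pi^*}$ are values of the same policy $\hat\pi^*$ in the two MDPs, I would reuse the fixed-point manipulation of the first part: expanding $\hat V^*-V^{\hat\pi^*}=\hat T^{\hat\pi^*}\hat V^*-T^{\hat\pi^*}V^{\hat\pi^*}$ by inserting $\pm\,T^{\hat\pi^*}\hat V^*$ gives $(I-\gamma\PKernel^{\hat\pi^*})(\hat V^*-V^{\hat\pi^*})=\gamma(\PKernelhat^{\hat\pi^*}-\PKernel^{\hat\pi^*})\hat V^*$, whence $\infnorm{\hat V^*-V^{\hat\pi^*}}\le c_1\infnorm{\epsmodel}\infnorm{\hat V^*}$ by the same resolvent and Pinsker bounds.

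Combining the two terms gives $\infnorm{V^*-V^{\hat\pi^*}}\le 2c_1\infnorm{\epsmodel}\infnorm{\hat V^*}$, and I would defer the conversion to $\infnorm{V^*}$ to the very end: the triangle inequality and the first-term bound give $\infnorm{\hat V^*}\le\infnorm{V^*}+c_1\infnorm{\epsmodel}\infnorm{\hat V^*}$, i.e. $\infnorm{\hat V^*}\le\infnorm{V^*}/(1-c_1\infnorm{\epsmodel})$, and substituting yields exactly $\frac{2c_1\infnorm{\epsmodel}}{1-c_1\infnorm{\epsmodel}}\infnorm{V^*}$. The hard part will be the second-term expansion: one must insert $\pm\,T^{\hat\pi^*}\hat V^*$ so that the residual $\gamma(\PKernelhat^{\hat\pi^*}-\PKernel^{\hat\pi^*})\hat V^*$ carries $\hat V^*$ rather than $V^{\hat\pi^*}$; inserting $\pm\,\hat T^{\hat\pi^*}V^{\hat\pi^*}$ instead would leave $\infnorm{V^{\hat\pi^*}}$ on the right, and converting that term-by-term to $\infnorm{V^*}$ yields a strictly larger constant. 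Keeping every estimate expressed through $\infnorm{\hat V^*}$ until the final substitution is precisely what produces the tight factor $2c_1/(1-c_1\infnorm{\epsmodel})$.
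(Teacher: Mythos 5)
Your proof is correct, and while your policy-evaluation argument coincides with the paper's (the same fixed-point identity $(I-\gamma\PKernelhat^\piz)(V^\piz-\hat V^\piz)=\gamma(\PKernel^\piz-\PKernelhat^\piz)V^\piz$, the Neumann-series bound $\frac{1}{1-\gamma}$, and the \Holder--Pinsker estimate $|((\PKernel-\PKernelhat)\phi)(x,a)|\le\sqrt2\,\infnorm{\phi}\,\epsmodel(x,a)$), your control argument takes a genuinely different route. You pass through the intermediate quantity $\hat V^*$, bound $\infnorm{V^*-\hat V^*}$ by a contraction argument on the two Bellman optimality operators, bound $\infnorm{\hat V^*-V^{\hat\pi^*}}$ by the PE identity applied to the fixed policy $\hat\pi^*$, and only at the end convert $\infnorm{\hat V^*}$ into $\infnorm{V^*}$ via the self-bounding inequality $\infnorm{\hat V^*}\le\infnorm{V^*}/(1-c_1\infnorm{\epsmodel})$. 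The paper instead never introduces $\hat V^*$: it uses a reward-perturbation trick borrowed from the OS-VI analysis (defining $r_0 = r + \gamma(\PKernel - \PKernelhat)V^*$ so that $V^{\pi^*}(r_0,\PKernelhat)=V^*(r,\PKernel)$), exploits optimality of $\hat\pi^*$ in $\hat M$ to sandwich $V^*-V^{\hat\pi^*}$ between expressions involving the operator $G^\pi_{\PKernel,\PKernelhat}=(\ident-\gamma\PKernelhat^\pi)^{-1}\gamma(\PKernel^\pi-\PKernelhat^\pi)$, and ends with a recursive term $c_1\infnorm{\epsmodel}\infnorm{V^*-V^{\hat\pi^*}}$ that is rearranged to produce the $1/(1-c_1\infnorm{\epsmodel})$ factor. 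Both derivations yield exactly the same constant and both implicitly require $c_1\infnorm{\epsmodel}<1$. Your route is the more elementary and self-contained one (the classical simulation-lemma decomposition); the paper's route buys reusability: all error terms are expressed through $G^\pi_{\PKernel,\PKernelhat}$, which is exactly the machinery redeployed later for the corrected kernel $\PKernelbar$ (Proposition~\ref{prop:gen-correction-bound} and Theorem~\ref{theorem:optimal-beta-infnorm} reuse inequality \eqref{eq:control-by-g-infnorm} verbatim with $\PKernelbar$ in place of $\PKernelhat$), whereas your decomposition through $\hat V^*$ would have to be redone for each new kernel.
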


Note that the model error impacts the PE solution through the term $(\PKernel^\piz - \PKernelhat^\piz) V^\piz $. A similar observation can be made for the Control problem. 
This dependence has been used in designing value-aware losses for model learning \citep{FarahmandVAML2017, Farahmand2018, VoelckerLiaoGargGarahmand2022, abachi2022viper} and proves to be useful in our work as well.

\subsection{Maximum entropy density estimation}
\label{sec:background.maxent}

Consider a random variable $Z$ defined over a domain $\ZZ$ with unknown distribution $p \in \MM(\ZZ)$, and a set of basis functions $\phi_i: \ZZ \ra \Real$  for $i = 1, 2, \ldots, d$.
Suppose that the expected values $\bar \phi_i = \Esubarg{p}{\phi_i(Z)}$ of these functions under $p$ are observed.
Our goal is to find a distribution $q$ such that $\Esubarg{q}{\phi_i(Z)}$ matches $\bar \phi_i$.
For example, if $\phi_1(z) = z$ and $\phi_2(z) = z^2$, we are interested in finding a $q$ such that its first and second moments are the same as $p$'s.

In general, there are many densities that satisfy these constraints. 
Maximum entropy (MaxEnt) principle prescribes picking the most \textit{uncertain} distribution as measured via (relative) entropy that is consistent with these observations~\citep{Jaynes1957}. MaxEnt chooses ${q^* = \argmax_{\Esubarg{q}{\phi_i(Z)} = \bar \phi_i} H(q)}$, where $H(q)$ is the entropy of $q$,
or equivalently, it minimizes the KL divergence (relative entropy) between $q$ and the uniform distribution (or Lebesgue measure) $u$, i.e.,~$q^* = \argmin_{\Esubarg{q}{\phi_i(Z)} = \bar \phi_i} \KL{q}{ u }$. 

In some applications, prior knowledge about the distribution $q$ is available. 
The MaxEnt principle can then be generalized to select the distribution with the minimum KL divergence to a prior $\hat p$:
\begin{align}
\label{eq:maxent-correction-primal}
q^* = \argmin_{\Esubarg{q}{\phi_i(Z)} = \bar \phi_i} \KL{q}{\hat p}.
\end{align}
This is called the Principle of minimum discrimination information or the Principle of Minimum Cross-Entropy \citep{kullback1959information,shore1980,Kapur1992}, and can be viewed as minimally \emph{correcting} the prior $\hat p$ to satisfy the constraints given by observations $\bar \phi_i$. In line with prior work, we call density estimation under this framework \textit{MaxEnt density estimation} whether or not the prior is taken to be the uniform distribution \citep{DudikPhillipsSchapire2004,dudik2007maximum}. 

While the choice of KL divergence is justified in various ways (e.g., the axiomatic approach of~\citealt{shore1980}), the use of other divergences has also been studied in the literature \citep{AltunSmola2006,botev2011generalized}. Although we focus on KL divergence in this work, in principle, our algorithms can also operate with other divergences provided that solving the analogous optimization problem of the form~\eqref{eq:maxent-correction-primal} is computationally feasible.

Problem \eqref{eq:maxent-correction-primal} and its variants have been studied in the literature; the solution is a member of the family of Gibbs distributions:
\begin{align}
\label{eq:gibbs-family}
	q_\lambda(A) = \int_{z\in A} \hat p(\dz) \cdot \exp{\sum_{i = 1}^d \lambda_i \phi_i(z) - \Lambda_\lambda},
\end{align}
where $A \subseteq \ZZ$, $\lambda \in \reals^d$, and $\Lambda_\lambda$ is the log normalizer, i.e.,~$\Lambda_\lambda = \log \int \hat p(\dz) \cdot \exp{\sum_{i = 1}^d \lambda_i \phi_i(z)}$.
The dual problem for finding the optimal $\lambda$ takes the form
\begin{align}
\label{eq:maxent-correction-dual}
\lambda^* = \argmin_{\lambda \in \reals^d} \log \int \hat p(\dz) \exp{\sum_{i = 1}^{d} \lambda_i \phi_i(z)} - \sum_{i = 1}^d \lambda_i \bar \phi_i\;.
\end{align}
Iterative scaling \citep{darroch1972generalized,Pietra1997},  gradient descent, Newton and quasi-Newton methods (see \cite{malouf-2002-comparison}) have been suggested for solving this problem. After finding $\lambda^*$, if $\Var[\operatorname{exp}(\sum_i \lambda_i \phi_i(\hat Z))]$ for $\hat Z \sim \hat p$ is small, e.g. when $\hat p$ has low stochasticity, $\Lambda_\lambda^*$ can be estimated with samples from $\hat p$. Then, one can sample from $q^*$ by sampling from $Z_0 \sim \hat p$ and assign the importance sampling weight $\exp{\sum_{i = 1}^d \lambda^*_i \phi_i(Z_0) - \Lambda_{\lambda^*}}$. In general algorithms such Markov Chain Monte Carlo  can be used for sampling \citep{brooks2011handbook}. When the observations $\bar \phi_i$ are empirical averages, 
 Maximum entropy density estimation is equivalent to maximum likelihood estimation that uses the family of Gibbs distributions of the form \eqref{eq:gibbs-family} \citep{Pietra1997}.

	\section{Maximum Entropy Model Correction}
	\label{sec:maxent-correction}

As discussed in Section~\ref{sec:background.maxent}, MaxEnt density estimation allows us to correct an initial estimated distribution of a random variable using an additional info in the form of the expected values of some functions of it. In this section, we introduce the MaxEnt Model Correction (MaxEnt MoCo) algorithm, which applies this tool to correct the next-state distributions needed for planning from the one in the approximate model $\PKernelhat$ towards the true one in $\PKernel$.  

We assume that for any function $\phi \colon \XX \to \reals$, we can obtain (an approximation of) $\PKernel \phi$. This operation is at the core of many RL algorithms. For instance, each iteration $k$ of Value Iteration (VI) involves obtaining $\PKernel V_k$ for value function $V_k$. This procedure can be approximated when samples from $\PKernel$ are available with techniques such as stochastic approximation (as in TD Learning) or regression (as in fitted value iteration). Due to its dependence on the true dynamics $\PKernel$, we consider this procedure costly and refer to it as a \textit{query}. On the other hand, we will ignore the cost of any other calculation that does not involve $\PKernel$, such as calculations and planning with $\PKernelhat$.
In Section~\ref{sec:maxent-correction.exact}, we consider the exact setting where similar to the conventional VI, we can obtain $\PKernel \phi$ exactly for any function $\phi \colon \XX \to \reals$ .
Then in Section~\ref{sec:maxent-correction.approx-sup}, we consider the case that some error exists in the obtained $\PKernel \phi$, which resembles the setting considered for approximate VI.

\subsection{Exact Form}
\label{sec:maxent-correction.exact}
In this section, we assume that for any function $\phi \colon \XX \to \reals$, we can obtain $\PKernel \phi$ exactly. We show that in this case, MaxEnt density estimation can be used to achieve planning algorithms with strictly better performance guarantees than Lemma~\ref{lemma:pure-mbrl}.
To see the effectiveness of MaxEnt density estimation to improve planning, consider the idealized case where the true value function $V^\piz$ for the PE problem is known to us. Consequently, we can obtain $\PKernel V^\piz$ by querying the true dynamics $\PKernel$. Assume that we could perform MaxEnt density estimation \eqref{eq:maxent-correction-primal} for every state $x$ and action $a$. We minimally change $\PKernelhat(\cdot | x,a)$ to a new distribution $\PKernelbar(\cdot | x,a)$ such that $\Esubarg{X' \sim \PKernelbar(\cdot | x,a)}{V^\piz(X')} = (\PKernel V^\piz) (x, a)$.

 We then use any arbitrary planning algorithm 
 using
 the new dynamics $\PKernelbar$ instead of $\PKernelhat$, which means we solve MDP $\bar M = (\XX, \AA, \RR, \PKernelbar)$ instead of $\hat M$. Due to the constraint in finding $\PKernelbar$, we have $\PKernelbar V^\piz = \PKernel V^\piz$, therefore
$
r^\piz + \gamma \PKernelbar^\piz V^\piz = r^\piz + \gamma \PKernel^\piz V^\piz = V^\piz
$.
In other words, $V^\piz$ satisfies the Bellman equation in $\bar M$. This means that MaxEnt MoCo completely eliminates the impact of the model error on the agent, and we obtain the true value function $V^\piz$. The same argument can be made for the Control problem when we know $V^*$ and correction is performed via constraints given by $\PKernel V^*$. The true optimal value function $V^*$ satisfies the Bellman optimality equation in $\bar M$, which means $\bar V^* = V^*$. The obtained optimal policy $\bar \pi^* = \pi_g(V^*, \PKernelbar)$ is also equal to $\pi^* = \pi_g(V^*, \PKernel)$.

In practice, the true value functions $V^\piz$ or $V^*$ are unknown -- we are trying to find them after all. In this case, we do the correction procedure with a set of \emph{basis functions} $\phi_1, \ldots, \phi_d$ with $\phi_i \colon \XX \to \reals$. The set of basis functions can be chosen arbitrarily. As shall be clear later, we prefer to choose them such that their span can approximate the true value function $V^\piz$ or $V^*$ well.
We emphasize that this is only a criteria for the choice of basis functions suggested by our analysis. The basis functions are not used to approximate or represent value functions by the agent.
In this section and Section~\ref{sec:maxent-correction.approx-sup}, we focus on the properties of model error correction for any given set of functions. In Sections ~\ref{sec:mocovi} and \ref{sec:mocodyna}, we will introduce techniques for finding a good set of such functions.

Now, we introduce the MaxEnt MoCo algorithm. In large or continuous MDPs, it is not feasible to perform MaxEnt density estimation for all $x,a$. Instead, we take a lazy computation approach and calculate $\PKernelbar(\cdot | x,a)$ only when needed. The dynamics $\PKernelbar: \XX \times \AA \to \MM(\XX)$ is never constructed as a function of states and actions by the agent, and it is defined only for the purpose of analysis. First, we  obtain $\PKernel \phi_i$ for $i = 1, 2, \ldots, d$ through $d$ queries to the true dynamics $\PKernel$. Then, we execute any planning algorithm that can normally be used in MBRL to solve the approximate MDP $\hat M$. The only modification is that whenever the planning algorithm uses the distribution $\PKernelhat(\cdot | x, a)$ for some state $x$ and action $a$, e.g. when simulating rollouts from $(x, a)$, we find a corrected distribution $\PKernelbar(\cdot | x, a)$ using MaxEnt density estimation and pass it to the planning algorithm instead of $\PKernelhat(\cdot | x,a)$ that would normally be used. The new distribution $\PKernelbar(\cdot | x, a)$ is given by
\begin{align}
\label{eq:gen-model-correction.primal}
\tag{P1}
\PKernelbar(\cdot | x, a) \defeq \argmin_{q \in \MM(\XX)} &\;\;\KL{q}{\PKernelhat( \cdot | x, a)},\\
\notag
\textnormal{such that} & \;\;\Esubarg{X' \sim q}{\phi_i(X')} = (\PKernel\phi_i) (x, a) \qquad (i = 1, 2, \ldots, d).
\end{align}

As discussed in Section~\ref{sec:maxent-correction}, the optimization problem \eqref{eq:gen-model-correction.primal} can be solved through the respective convex dual problem as in \eqref{eq:maxent-correction-dual}. Also note that the dual problem only has $d$ parameters, which is usually small,\footnote{For a reference, in our experiments $d \le 3$. Even if $d$ is large, specialized algorithms have been developed to efficiently solve the optimization problem \citep{dudik2007maximum}.} and solving it only involves $\PKernelhat$ that is considered cheap.

We now analyze the performance of MaxEnt MoCo in PE. Let $\bar V^\piz$ be the value function of $\piz$ in MDP $\bar M = (\XX, \AA, \RR, \PKernelbar, \gamma)$. We will show that the error of MaxEnt MoCo depends on how well $V^\piz$ can be approximated with a linear combination of the basis functions. To see this, first note that
the constraints in \eqref{eq:gen-model-correction.primal} mean that $(\PKernelpizbar - \PKernelpiz) \phi_i = 0$. Thus, for any $w \in \reals^d$ we can write the upper bound on $\smallnorm{V^\piz - \bar V^\piz}_\infty $ that is given in Lemma~\ref{lemma:pure-mbrl} as
\begin{align}
\label{eq:MoCo-PE-sup-derivation}
\frac{\gamma}{1 - \gamma}\norm{(\PKernelpiz - \PKernelpizbar) V^\piz}_\infty
&= \frac{\gamma}{1 - \gamma}\Biginfnorm{(\PKernelpiz - \PKernelpizbar) (V^\piz - \sum_{i = 1}^d w_i \phi_i)}\\
\nonumber
&\le \frac{\sqrt{2} \gamma }{1 - \gamma} \sup_{x, a} \sqrt{\KL{\PKernel(\cdot|x,a) }{\PKernelbar(\cdot | x,a)}} \,  \Biginfnorm{ V^\piz - \sum_{i = 1}^d w_i\phi_i },
\end{align}
where the last inequality is proved similar to the proof of the second inequality in Lemma~\ref{lemma:pure-mbrl}. 
Now, from the general Pythagoras theorem for KL-divergence (see Thm. 11.6.1 of \citealt{Cover2006}), for any $(x, a)$, we have
\begin{align}
\label{eq:pythagoras-pbar}
	\KL{\PKernel(\cdot|x,a) }{\PKernelbar(\cdot | x,a)} \le 	\KL{\PKernel(\cdot|x,a) }{\PKernelhat(\cdot | x,a)}.
\end{align}
This inequality is of independent interest as it shows that MaxEnt MoCo is reducing the MLE loss of the model. It is worth mentioning that since $\PKernelbar$ is not constructed by the agent, this improved MLE loss can go beyond what is possible with the agent's model class. A feature that is valuable in complex environments that are hard to model.
Inequalities \eqref{eq:pythagoras-pbar} and~\eqref{eq:MoCo-PE-sup-derivation} lead to an upper bound on $\smallnorm{V^\piz - \bar V^\piz}_\infty$. 
We have the following proposition:
\begin{proposition}
	\label{prop:gen-correction-bound}
	Suppose that $\PKernel$ is the true environment dynamics, $\PKernelhat$ is an approximation of $\PKernel$, and $\epsmodel$ is defined as in (\ref{eq:eps-model}). Let $c_1 = \gamma \sqrt{2} / (1 - \gamma)$ as in Lemma~\ref{lemma:pure-mbrl}.
	Then,
	\begin{align*}
		& \norm{V^\piz - \bar V^\piz}_\infty 
		\le c_1 \infnorm{\epsmodel}  \inf_{w \in \reals^d } \Biginfnorm{ V^\piz -  \sum_{i = 1}^d w_i  \phi_i }, \\
		&
		\norm{V^* - V^{\bar \pi^*}}_\infty 
		\le \frac{2c_1 \infnorm{\epsmodel}}{1 - c_1 \infnorm{\epsmodel}} \inf_{w \in \reals^d }  \Biginfnorm{ V^*- \sum_{i = 1}^d w_i  \phi_i }.
	\end{align*}
\end{proposition}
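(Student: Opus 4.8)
The plan is to isolate the one structural consequence of the correction constraints and reuse it in both bounds. Since \eqref{eq:gen-model-correction.primal} forces $\Esubarg{X'\sim\PKernelbar(\cdot|x,a)}{\phi_i(X')}=(\PKernel\phi_i)(x,a)$ at every $(x,a)$, we have the pointwise identity $\PKernelbar\phi_i=\PKernel\phi_i$ on $\XA$, and hence $(\PKernel^\pi-\PKernelbar^\pi)\phi_i=0$ for \emph{every} policy $\pi$ (average the $\XA$-identity over $\pi(\cdot|x)$). The payoff is that inside any error term of the form $(\PKernel^\pi-\PKernelbar^\pi)g$ we may freely replace $g$ by $g-\sum_i w_i\phi_i$ for arbitrary $w\in\reals^d$ without changing its value; this is precisely how the approximation-error infimum enters the bounds.

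For the policy-evaluation claim, almost all of the work is already contained in \eqref{eq:MoCo-PE-sup-derivation}. I would first combine \eqref{eq:MoCo-PE-sup-derivation} with the Pythagorean inequality \eqref{eq:pythagoras-pbar} and the definition of $\epsmodel$ to get $\sup_{x,a}\sqrt{\KL{\PKernel(\cdot|x,a)}{\PKernelbar(\cdot|x,a)}}\le\infnorm{\epsmodel}$. Applying the first inequality of Lemma~\ref{lemma:pure-mbrl} with $\bar M$ in place of $\hat M$ gives $\norm{V^\piz-\bar V^\piz}_\infty\le\frac{\gamma}{1-\gamma}\norm{(\PKernelpiz-\PKernelpizbar)V^\piz}_\infty$, and chaining this with \eqref{eq:MoCo-PE-sup-derivation} yields the bound $c_1\infnorm{\epsmodel}\Biginfnorm{V^\piz-\sum_i w_i\phi_i}$ for every $w$; taking the infimum over $w$ (free, since the inequality holds for each $w$) proves the first claim.

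For the control claim I would mirror the proof of the second inequality of Lemma~\ref{lemma:pure-mbrl} with $\PKernelhat$ replaced by $\PKernelbar$, decomposing $V^*-V^{\bar\pi^*}=(V^*-\bar V^*)+(\bar V^*-V^{\bar\pi^*})$, where $\bar V^*$ is the optimal value in $\bar M$ and $\bar\pi^*$ its greedy policy, so that $\bar V^*=\bar V^{\bar\pi^*}$. For the first term I would run the standard contraction argument on the optimality operators to get $\norm{V^*-\bar V^*}_\infty\le\frac{\gamma}{1-\gamma}\sup_{x,a}|((\PKernel-\PKernelbar)V^*)(x,a)|$; substituting $V^*\mapsto V^*-\sum_i w_i\phi_i$ and applying Pinsker's inequality $\norm{p-q}_1\le\sqrt{2\KL{p}{q}}$ together with \eqref{eq:pythagoras-pbar} bounds this by $c_1\infnorm{\epsmodel}\inf_w\Biginfnorm{V^*-\sum_i w_i\phi_i}$. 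For the second term I would invoke the policy-evaluation difference identity $\bar V^{\bar\pi^*}-V^{\bar\pi^*}=\gamma(I-\gamma\PKernelbar^{\bar\pi^*})^{-1}(\PKernelbar^{\bar\pi^*}-\PKernel^{\bar\pi^*})V^{\bar\pi^*}$ for the \emph{fixed} policy $\bar\pi^*$, which after the same substitution is bounded by $c_1\infnorm{\epsmodel}\inf_w\Biginfnorm{V^{\bar\pi^*}-\sum_i w_i\phi_i}$.

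The main obstacle is that this second term naturally produces the approximation error of $V^{\bar\pi^*}$, the value of the \emph{suboptimal} policy in the true MDP, rather than of $V^*$. To close the loop I would use $\inf_w\Biginfnorm{V^{\bar\pi^*}-\sum_i w_i\phi_i}\le\inf_w\Biginfnorm{V^*-\sum_i w_i\phi_i}+\norm{V^*-V^{\bar\pi^*}}_\infty$ (evaluate the left infimum at the $w$ optimal for $V^*$ and apply the triangle inequality). Writing $\Delta^*=\inf_w\Biginfnorm{V^*-\sum_i w_i\phi_i}$ and $e=\norm{V^*-V^{\bar\pi^*}}_\infty$, the two term bounds combine into the self-referential inequality $e\le 2c_1\infnorm{\epsmodel}\,\Delta^*+c_1\infnorm{\epsmodel}\,e$, and solving for $e$ (assuming $c_1\infnorm{\epsmodel}<1$) gives the factor $\frac{2c_1\infnorm{\epsmodel}}{1-c_1\infnorm{\epsmodel}}$. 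The only delicate bookkeeping is tracking signs and nonnegativity so that the operator bounds $\norm{(I-\gamma\PKernel^{\bar\pi^*})^{-1}}\le\frac{1}{1-\gamma}$ and $\norm{(I-\gamma\PKernelbar^{\bar\pi^*})^{-1}}\le\frac{1}{1-\gamma}$ are applied to the intended quantities, and confirming that the constraint identity $(\PKernel^\pi-\PKernelbar^\pi)\phi_i=0$ is used at $\pi=\bar\pi^*$ (and, in the first term, directly on $\XA$).
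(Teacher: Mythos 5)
Your proof is correct, and while the policy-evaluation half follows essentially the same path as the paper (use the constraint to get $(\PKernelpiz-\PKernelpizbar)\phi_i=0$, insert $\sum_i w_i\phi_i$ for free, then apply Pinsker plus the Pythagorean inequality~\eqref{eq:pythagoras-pbar}), your control half takes a genuinely different route. The paper does not go through $\bar V^*$ at all: it reuses the decomposition established in its proof of Lemma~\ref{lemma:pure-mbrl} for control (based on the auxiliary reward $r_0 = r + (\gamma\PKernel - \gamma\PKernelbar)V^*$ and the identity $V^{\pi^*}(r_0,\PKernelbar)=V^*(r,\PKernel)$), which yields the three-term bound $\infnorm{V^*-V^{\bar\pi^*}} \le \infnorm{G^{\pi^*}_{\PKernel,\PKernelbar}V^*} + \infnorm{G^{\bar\pi^*}_{\PKernel,\PKernelbar}V^*} + \infnorm{G^{\bar\pi^*}_{\PKernel,\PKernelbar}(V^*-V^{\bar\pi^*})}$; since the first two terms involve $V^*$ itself, the basis-function subtraction applies to them directly and no triangle-inequality conversion is needed, while the third term immediately gives $c_1\infnorm{\epsmodel}\,\infnorm{V^*-V^{\bar\pi^*}}$. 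You instead use the classical simulation-lemma split $V^*-V^{\bar\pi^*} = (V^*-\bar V^*) + (\bar V^{\bar\pi^*}-V^{\bar\pi^*})$, handling the first piece by contraction of the Bellman optimality operators and the second by the policy-evaluation difference identity; the price is that the second piece produces $\inf_w\infnorm{V^{\bar\pi^*}-\sum_i w_i\phi_i}$, which you correctly repair with the triangle inequality, and both routes then land on the identical self-referential inequality $e \le 2c_1\infnorm{\epsmodel}\Delta^* + c_1\infnorm{\epsmodel}\,e$ and hence the same constants. What each buys: the paper's argument is shorter once Lemma~\ref{lemma:pure-mbrl}'s machinery is in place and keeps every approximation term in $V^*$; yours is more elementary and self-contained (only standard MDP facts, no $r_0$ identity imported from prior work), at the cost of the extra bookkeeping step converting the approximation error of $V^{\bar\pi^*}$ into $\Delta^* + e$.
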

The significance of this result becomes apparent upon comparison with Lemma \ref{lemma:pure-mbrl}. Whenever the value function can be represented sufficiently well within the span of the basis functions $\{ \phi_i \}$ used for correcting $\PKernelhat$, the error between the value function $\bar{V}$ of the modified dynamics $\PKernelbar$ compared to the true value function $V^\piz$ is significantly smaller than the error of the value function $\hat{V}^\piz$ obtained from $\PKernelhat$ --- compare $\;\inf_{w \in \reals^d } \smallnorm{ V^\piz -  \sum_{i = 1}^d w_i  \phi_i }_\infty$ with $\;\norm{ V^\piz}_\infty$.

\subsection{Approximate Form}
\label{sec:maxent-correction.approx-sup}

In the previous section, we assumed that the agent can obtain $\PKernel \phi_i$ exactly. This is an unrealistic assumption when we only have access to samples from $\PKernel$ such as in the RL setting. Estimating $\PKernel \phi_i$ from samples is a regression problem and has error. We assume that we have access to the approximations $\psi_i \colon \XX \times \AA \to \reals$ of $\PKernel\phi_i$ such that $\psi_i \approx \PKernel \phi_i$ with the error quantified by $\epsquery$. Specifically, for any $(x,a)$, we have $	\epsquery(x, a) = \smallnorm{\vecpsi(x, a)  - (\PKernel\vecphi)(x,a)}_2$ where $\vecphi \colon \XX \to \reals^d$ and $\vecpsi \colon \XX \times \AA \to \reals^d$ are the $d$-dimensional vectors formed by $\phi_i$ and $\psi_i$ functions.

When the observations are noisy, MaxEnt density estimation is prone to overfiting \citep{dudik2007maximum}. Many techniques have been introduced to alleviate this issue including regularization \citep{chen2000, lebanon2001}, introduction of a prior \citep{goodman-2004-exponential}, and constraint relaxation \citep{kazama-tsujii-2003-evaluation, dudik2004}. In this work, we use $\ell_2^2$ regularization \citep{lau1994adaptive,chen2000survey,lebanon2001,zhang2004class,dudik2007maximum} and leave the study of the other approaches to future work.

The regularization is done by adding $\frac{1}{4}\beta^2 \norm{\lambda}_2^2$ to the objective of the dual problem \eqref{eq:maxent-correction-dual}. This pushes the dual parameters to remain small. The hyperparameter $\beta$ controls the amount of regularization. Smaller $\beta$ leads a solution closer to the original one. Notice that with extreme regularization when $\beta \to \infty$, we get $\lambda = 0$, which makes the solution of MaxEnt density estimation the same as the initial density estimate $\hat p$. The regularization of the dual problem has an intuitive interpretation in the primal problem. With the regularization, the primal problem \eqref{eq:gen-model-correction.primal} is transformed to
\begin{align}
\label{eq:reg-gen-model-correction.primal}
\tag{P2}
\PKernelbar(\cdot | x, a) \defeq \argmin_q \;\KL{q}{\PKernelhat( \cdot | x, a)} + \frac{1}{\beta^2}\sum_{i = 1}^d \Big( \Esubarg{X' \sim q}{\phi_i(X')} - \psi_i (x, a) \Big)^2.
\end{align}
We now have introduced a new hyperparameter $\beta$ to MaxEnt MoCo. As $\beta \to 0$, the solution converges to that of the constrained problem \eqref{eq:gen-model-correction.primal}, because intuitively, $\beta$ controls how much we \textit{trust} the noisy observations $\psi_i$. Smaller values of $\beta$ means that we care about being consistent with the queries more than staying close to $\PKernelhat$, and larger values of $\beta$ shows the opposite preference. It turns out the impact of the choice of $\beta$ is aligned with this intuition. As $\infnorm{\epsmodel}$ increases or $\infnorm{\epsquery}$ decreases, we should rely on the queries more and choose a smaller $\beta$. We provide the analysis for a general choice of $\beta$ in the supplementary material, and here focus on when $\beta =\smallnorm{\epsquery}_\infty / \smallnorm{\epsmodel}_\infty$.
\begin{theorem}
\label{theorem:optimal-beta-infnorm}
Let $c_1 = \gamma \sqrt{2}/(1 - \gamma)$, $c_2 = 3\gamma\sqrt{d} / (1 - \gamma)$, and $\beta = \smallnorm{\epsquery}_\infty / \smallnorm{\epsmodel}_\infty $. For any $\wmax \ge 0$, we have
\begin{gather*}
\norm{V^\piz - \bar V^\piz}_\infty 
\le  3c_1 \norm{\epsmodel}_\infty \inf_{ \infnorm{w} \le \wmax} \Biginfnorm{ V^\piz -  \sum_{i = 1}^d w_i \phi_i }	+ c_2\norm{\epsquery}_\infty \cdot \wmax, \\
\norm{V^* - V^{\bar \pi^*}}_\infty 
\!\!\!\le \frac{6c_1 \norm{\epsmodel}_\infty} {1 - 3c_1 \norm{\epsmodel}_\infty} \inf_{ \infnorm{w} \le \wmax} \Biginfnorm{ V^* -  \sum_{i = 1}^d w_i \phi_i } \!\!\!\!\! +
\frac{2c_2\norm{\epsquery}_\infty}{1 - 3c_1\norm{\epsmodel}_\infty} \cdot \wmax.
\end{gather*}
\end{theorem}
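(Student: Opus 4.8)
The plan is to mirror the exact-case argument behind Proposition~\ref{prop:gen-correction-bound}, replacing the two ingredients that were exact there -- the moment-matching constraint $(\PKernelpizbar-\PKernelpiz)\phi_i=0$ and the Pythagorean inequality \eqref{eq:pythagoras-pbar} -- by approximate versions carrying the regularization parameter $\beta$ and the query error $\epsquery$. As in \eqref{eq:MoCo-PE-sup-derivation}, I start from the Lemma~\ref{lemma:pure-mbrl} bound $\smallnorm{V^\piz-\bar V^\piz}_\infty\le\frac{\gamma}{1-\gamma}\smallnorm{(\PKernelpiz-\PKernelpizbar)V^\piz}_\infty$ and, for an arbitrary $w$ with $\infnorm{w}\le\wmax$, split $(\PKernelpiz-\PKernelpizbar)V^\piz$ into a \emph{smoothness term} $(\PKernelpiz-\PKernelpizbar)(V^\piz-\sum_i w_i\phi_i)$ and a \emph{moment term} $\sum_i w_i(\PKernelpiz-\PKernelpizbar)\phi_i$. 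The smoothness term will be controlled by an approximate Pythagoras bound on $\KL{\PKernel}{\PKernelbar}$, and the moment term by a bound on the residual constraint violation $\smallnorm{(\PKernelbar\vecphi-\PKernel\vecphi)(x,a)}_2$; these are the two lemmas I need.

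Both lemmas follow from the optimality of $\PKernelbar(\cdot|x,a)$ in \eqref{eq:reg-gen-model-correction.primal} together with its Gibbs form \eqref{eq:gibbs-family}. Fix $(x,a)$ and let $\lambda^*$ be the corresponding dual solution; stationarity of the regularized dual (the objective \eqref{eq:maxent-correction-dual} plus $\tfrac14\beta^2\smallnorm{\lambda}_2^2$) gives the key identity $\Esubarg{X'\sim\PKernelbar}{\phi_i(X')}-\psi_i(x,a)=-\tfrac12\beta^2\lambda_i^*$, i.e.\ the moment gap is proportional to $\lambda^*$. For the constraint-violation bound I would evaluate the objective of \eqref{eq:reg-gen-model-correction.primal} at the feasible competitor $q=\PKernel(\cdot|x,a)$, whose value is $\epsmodel^2+\epsquery^2/\beta^2$; since $\PKernelbar$ is the minimizer and $\KL{\PKernelbar}{\PKernelhat}\ge0$, this controls $\tfrac12\beta^2\smallnorm{\lambda^*}_2=\smallnorm{\Esubarg{\PKernelbar}{\vecphi}-\vecpsi}_2\le\beta\epsmodel+\epsquery$, and a triangle inequality against the query error yields $\smallnorm{(\PKernelbar\vecphi-\PKernel\vecphi)(x,a)}_2\le\beta\epsmodel(x,a)+2\epsquery(x,a)$. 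For the Pythagoras bound I would use the three-point identity $\KL{q}{\PKernelhat}-\KL{q}{\PKernelbar}=\binner{\lambda^*}{\Esubarg{q}{\vecphi}}-\Lambda_{\lambda^*}$, which holds for every $q$ by \eqref{eq:gibbs-family}; subtracting its instances at $q=\PKernel$ and $q=\PKernelbar$, substituting the stationarity identity, dropping $-\KL{\PKernelbar}{\PKernelhat}\le0$, and maximizing the resulting concave quadratic in $\smallnorm{\lambda^*}_2$ gives $\KL{\PKernel(\cdot|x,a)}{\PKernelbar(\cdot|x,a)}\le\epsmodel(x,a)^2+\epsquery(x,a)^2/(2\beta^2)$. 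This is the natural regularized generalization of \eqref{eq:pythagoras-pbar}, to which it collapses as $\epsquery\to0$.

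With these two bounds the assembly is routine. The smoothness term is handled exactly as in \eqref{eq:MoCo-PE-sup-derivation} via Pinsker's inequality but with $\KL{\PKernel}{\PKernelbar}$ replaced by the approximate-Pythagoras bound, and the moment term by Cauchy--Schwarz together with $\smallnorm{w}_2\le\sqrt d\,\wmax$ and the constraint-violation bound. Plugging the prescribed $\beta=\infnorm{\epsquery}/\infnorm{\epsmodel}$ turns the two $\beta$-dependent error factors into pure multiples of $\infnorm{\epsmodel}$ and $\infnorm{\epsquery}$ respectively; collecting the numerical constants (in particular $\sqrt2+1\le3$ for the smoothness coefficient and the factor $3\sqrt d$ for the moment coefficient) and taking the infimum over $w$ produces the stated PE bound with constants $3c_1$ and $c_2$. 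The Control bound follows by running the identical smoothness/moment split on $V^*$ and feeding it into the self-referential control-error propagation already used for the second inequality of Lemma~\ref{lemma:pure-mbrl}; there the effective KL-error coefficient $c_1\sup_{x,a}\sqrt{\KL{\PKernel}{\PKernelbar}}$ appears in the denominator and is itself bounded by $3c_1\infnorm{\epsmodel}$, which yields the factor-two numerators and the $1-3c_1\infnorm{\epsmodel}$ denominator.

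The main obstacle is the approximate Pythagorean inequality: it is what makes the regularized correction provably no worse -- up to the controlled $\epsquery/\beta$ slack -- than the prior $\PKernelhat$, and obtaining a clean, tight form requires both the stationarity identity and the quadratic-maximization step rather than a crude triangle-inequality bound on $\smallnorm{\lambda^*}_2$. A secondary but genuine subtlety is that on continuous $\XX$ all of this presumes that \eqref{eq:reg-gen-model-correction.primal} actually attains a minimizer of Gibbs form \eqref{eq:gibbs-family} with a finite log-normalizer $\Lambda_{\lambda^*}$, so that the three-point identity is valid for general measures; I would discharge this by invoking the standing regularity assumptions on the basis functions and on $\PKernelhat$.
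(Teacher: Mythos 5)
Your proposal is correct, and at the level of architecture it coincides with the paper's proof: the same smoothness/moment split of $(\PKernelpiz-\PKernelpizbar)V^\piz$ (the paper's Lemma~\ref{lemma:G-supnorm-approx}), the same bound on the residual moment violation obtained by evaluating the objective of \eqref{eq:reg-gen-model-correction.primal} at the feasible competitor $q=\PKernel(\cdot|x,a)$ and dropping the KL term (Lemma~\ref{lemma:p-pbar-phi-bound}), the same self-referential control propagation \eqref{eq:control-by-g-infnorm} with rearrangement, and the same final substitution $\beta=\infnorm{\epsquery}/\infnorm{\epsmodel}$ into a general-$\beta$ bound (Theorem~\ref{theorem:supnorm-maxent-correction}). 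The one genuinely different ingredient is your approximate Pythagorean inequality. The paper (Lemma~\ref{lemma:moco-kl-bound-reg}) gets $\KL{\PKernel(\cdot|x,a)}{\PKernelbar(\cdot|x,a)}\le\epsmodel(x,a)^2+\tfrac{2}{\beta^2}\epsquery(x,a)^2$ by specializing the general regret bound for regularized MaxEnt (Lemma~\ref{lemma:kl-bound-reg}, whose proof requires the auxiliary optimum $\lambda^{**}$ and the dual-stability Lemma~\ref{lemma:lambda-dif-bound}) to $\lambda=0$; you instead combine the dual stationarity identity $\Esubarg{X'\sim\PKernelbar(\cdot|x,a)}{\vecphi(X')}-\vecpsi(x,a)=-\tfrac12\beta^2\lambda^*$ with the three-point identity and maximize a concave quadratic in $\smallnorm{\lambda^*}_2$, obtaining the tighter $\epsmodel(x,a)^2+\tfrac{1}{2\beta^2}\epsquery(x,a)^2$. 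Both derivations are valid; yours is more self-contained and, after Pinsker, yields smoothness coefficient $(\sqrt2+1)\tfrac{\gamma}{1-\gamma}\infnorm{\epsmodel}$ versus the paper's $(\sqrt2+2)\tfrac{\gamma}{1-\gamma}\infnorm{\epsmodel}$, so it implies the stated constants $3c_1$, $c_2$ with room to spare (the moment term works out to exactly $c_2\infnorm{\epsquery}\wmax$ in both). Your closing regularity caveat about dual attainment and the Gibbs form on continuous $\XX$ is discharged exactly as you anticipate: the paper's Theorem~\ref{theorem:duality-l2} establishes strong duality with attainment under boundedness of the $\phi_i$.
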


The above theorem shows that the error in the queries contribute an additive term to the final bounds compared to the exact query setting analyzed in Proposition~\ref{prop:gen-correction-bound}. This term scales with $\wmax$, which can be chosen arbitrarily to minimize the upper bound. Larger values of $\wmax$ allow a better approximation of $V^\piz$ and $V^*$ in the infimum terms, but amplify the query error $\epsquery$. Thus, if $V^\piz$ (or $V^*$) can be approximated by some weighted sum of the basis functions using smaller weights, $\wmax$ can be chosen to be smaller. Unlike the exact case discussed in Proposition~\ref{prop:gen-correction-bound}, the choice of basis functions is important beyond the subspace generated by their span. Therefore, transformations of the basis functions such as centralization, normalization, or orthogonalization 
might improve the effectiveness of MaxEnt Model Correction. 

One limitation of the results of Theorem~\ref{theorem:optimal-beta-infnorm} is that they depend on the $\ell_\infty$ norm of $\epsmodel$ and $\epsquery$. However, if  the functions $\PKernelhat$ and $\psi_i$ are estimated with function approximation, their error is generally controlled in some weighted $\ell_p$ norm. Thus, error analysis of RL algorithms in weighted $\ell_p$ norm is essential and has been the subject of many studies \citep{Munos03,Munos07,FarahmandMunosSzepesvari10,ScherrerGhavamzadehGabillonetal2015}. We do provide this analysis for MaxEnt MoCo, but to keep the main body of the paper short and simple, we defer them to the supplementary material.

	\section{Model Correcting Value Iteration}
	\label{sec:mocovi}

In the previous section, we introduced MaxEnt model correction for a given set of query functions $\phi_1, \ldots, \phi_d$. We saw that a good set of functions is one that for some $w \in \reals^d$, the true value function $V^\piz$ or $V^*$ is well approximated by $\sum_i w_i \phi_i$. In this section, we introduce the Model Correcting Value Iteration (MoCoVI) algorithm that iteratively finds increasingly better basis functions. We show that if the model is accurate enough, MoCoVI can utilize the approximate model to converge to the true value function despite the model error, and do so with a better convergence rate than the conventional VI. Since MoCoVI calls the MaxEnt MoCo procedure iteratively, we introduce a notation for it. If $\PKernelbar$ is the corrected dynamics based on the set of basis functions $\Phi$ and their query results $\Psi$, and $\bar V^\piz, \bar V^*, \bar \pi^*$ are the respective $V^\piz, V^*, \pi^*$ in $\bar M = (\XX, \AA, \RKernel, \PKernelbar)$, we define $\MoCoPE(\RKernel, \PKernelhat, \Phi, \Psi) \defeq \bar V^\piz$ and $\MoCoControl(\RKernel, \PKernelhat, \Phi, \Psi) \defeq (\bar V^*, \bar \pi^*)$ to be the solution of PE and Control problems obtained with MaxEnt MoCo.

To start with, consider the PE problem and assume that we can make exact queries to $\PKernel$. We set $\phi_{1}, \ldots, \phi_{d} \colon \XX \to \reals$ to be an arbitrary initial set of basis functions, with query results $\psi_i = \PKernel \phi_i$ for $1 \le i \le d$. We perform the MaxEnt MoCo procedure using $\phi_{1:d}$ and $\psi_{1:d}$ to obtain $V_0 = \MoCoPE(\RKernel, \PKernelhat, \phi_{1:d}, \psi_{1:d})$.
In the next iteration, we set $\phi_{d+1} = V_0$.\footnote{According to the discussion after Theorem~\ref{theorem:optimal-beta-infnorm}, it might be beneficial to set $\phi_{d+1}$ to some linear transformations of $V_0$ in presence of query error. For the sake of simplicity of the results, we don't consider such operations.}
Then, we query $\PKernel$ at $\phi_{d+1}$ to obtain $\psi_{d+1} = \PKernel \phi_{d+1}$. By executing MaxEnt MoCo with the last $d$ queries, we arrive at $V_1 = \MoCoPE(\RKernel, \PKernelhat, \phi_{2:d+1}, \psi_{2:d+1})$. We can use Proposition~\ref{prop:gen-correction-bound} to bound the error of $V_1$. 
\begin{align*}
 \infnorm{V^\piz - V_1} &\le  \frac{\gamma \sqrt{2}}{1 - \gamma} \cdot \infnorm{\epsmodel} \cdot  \frac{\inf_{w \in \reals^d}  \infnorm{V^\piz - \sum_{i=1}^d w_i \phi_{1+i}}}{\infnorm{V^\piz - V_0}} \cdot \infnorm{V^\piz - V_0}
\end{align*}
As $\sum_{i=1}^d w_i \phi_{1+i}$ is equal to $V_0$ with the choice of $w_{1:d-1}= 0$ and $w_d=1$, the fraction above is less than or equal to $1$. Generally, the fraction gets smaller with larger $d$ and better basis function, leading to a more accurate $V_1$. If the model is accurate enough, the new value function $V_1$ is a more accurate approximation of $V^\piz$ than the initial $V_0$. By repeating this procedure we may converge to $V^\piz$.

We now introduce MoCoVI based on the above idea. We start with an initial set of basis functions $\phi_1, \ldots, \phi_d$ and their query results $\psi_1, \ldots, \psi_d$ such that $\psi_i \approx \PKernel \phi_i$ for $1 \leq i \leq d$. At each iteration $k \ge 0$, we execute MaxEnt MoCo with $\phi_{k+1:k+d}$ and $\psi_{k+1:k+d}$ to obtain $V_k$ (and $\pi_k$). In the end, we set $\phi_{k + d + 1} = V_k$ and query $\PKernel$ to get the new query result. That is, for any $k \ge 0$
\begin{align*}
\begin{cases}
    V_k = \MoCoPE(\RKernel, \PKernelhat, \phi_{k+1:k+d}, \psi_{k+1:k+d})
    \quad \text{or} \quad V_k, \pi_k = \MoCoControl(\RKernel, \PKernelhat, \phi_{k+1:k+d}, \psi_{k+1:k+d}),
    \\
    \phi_{k + d + 1} = V_k \;,\; \psi_{k+d+1} \approx \PKernel \phi_{k+d+1}.
    \end{cases}
\end{align*}
The choice of value functions can be motivated from two viewpoints. First, it has been suggested that features learned to represent the past value function may be useful to represent the true value functions as well \citep{dabney2021value}. This suggests that the true value function may be approximated with the span of the past value functions. A property shown to be useful in Theorem~\ref{theorem:mocovi-supnorm}. Second, this choice means that the corrected transition dynamics $\PKernelbar$ at iteration $k$ will satisfy $\PKernelbar V_{k-i} \approx \PKernel V_{k-i}$ for $i = 1, 2, \ldots, d$. This property has been recognized to be valuable for the dynamics that is used for planning in MBRL, and implemented in value-aware model learning losses \citep{FarahmandVAML2017,Farahmand2018,AbachiGhavamzadehFarahmand2020,VoelckerLiaoGargGarahmand2022,abachi2022viper}. However, practical implementations of these losses has been shown to be challenging \citep{VoelckerLiaoGargGarahmand2022,pmlr-v137-lovatto20a}. In comparison, MoCoVI works with any model learning approach and creates this property through MaxEnt density estimation. The next theorem provides convergence result of MoCoVI in supremum norm based on the analysis in Theorem~\ref{theorem:optimal-beta-infnorm}. 

\begin{theorem}
\label{theorem:mocovi-supnorm}
Let $K \ge 1$. Assume $\epsquery^\infty(x, a) = \sqrt{d} \cdot \sup_{i \ge 0} \left|(\PKernel\phi_i)(x,a) - \psi_i(x, a)\right|$ and $\beta = \smallnorm{\epsquery}_\infty / \smallnorm{\epsmodel}_\infty $. Let $c_1, c_2$ be as in Theorem~\ref{theorem:optimal-beta-infnorm} and $\wmax \ge 1$. Define $\Vtarget = V^\piz$ for PE and $\Vtarget = V^*$ for Control. Finally, let
\begin{align*}
\gamma' =  3c_1 \infnorm{\epsmodel} \cdot  
\max_{1 \le k \le K} \frac{\inf_{\smallnorm{w}_\infty \le \wmax} \; \smallnorm{\Vtarget- \sum_{i =1}^{d} w_i \phi_{k + i}}_\infty}{\infnorm{\Vtarget- V_{k-1}}}. 
\end{align*}
We have 

\begin{equation*}
e_1 = \frac{\gamma}{1 - \gamma}\cdot  \left(\sqrt{2}\norm{\epsmodel}_\infty + \frac{2}{\beta} \norm{\epsquery}_\infty\right), \quad e_2 = \frac{ \sqrt{d} \cdot \gamma}{1 - \gamma}\cdot \left(\beta \norm{\epsmodel}_\infty + 2\norm{\epsquery}_\infty\right).
\end{equation*}
Then, for any $\wmax \ge 0$, we have
\begin{gather*}
\norm{V^\piz - \bar V^\piz}_\infty 
\le  e_1 \inf_{ \infnorm{w} \le \wmax} \cdot \Biginfnorm{ V^\piz -  \sum_{i = 1}^d w_i \cdot \phi_i }	+ e_2 \cdot \wmax,\\
\norm{V^* - V^{\bar \pi^*}}_\infty 
\le \frac{2e_1}{1 - e_1} \inf_{ \infnorm{w} \le \wmax} \cdot \Biginfnorm{ V^* -  \sum_{i = 1}^d w_i \cdot \phi_i } 		+
\frac{2e_2}{1 - e_1} \cdot \wmax.
\end{gather*}
\end{theorem}

\begin{proof}
The PE result is a direct consequence of Lemma~\ref{lemma:G-supnorm-approx} and \eqref{eq:diff-V-GV}. We have

\begin{align*}
\norm{V^\piz - \bar V^\piz}_\infty  
&= \infnorm{G^\piz_{\PKernel, \PKernelbar} V^\piz}\\
&\le \inf_{w \in \reals^d} \left[
e_1  \cdot \Biginfnorm{ V^\piz -  \sum_{i = 1}^d w_i \cdot \phi_i }	+ e_2 \cdot \infnorm{w}
\right]
\\
&\le e_1 \cdot \inf_{ \infnorm{w} \le \wmax} \cdot \Biginfnorm{ V^\piz -  \sum_{i = 1}^d w_i \cdot \phi_i } + e_2 \cdot \wmax. 
\end{align*}

For control, from \eqref{eq:control-by-g-infnorm} we have
\begin{align*}
\infnorm{V^{\pi^*} - V^{\bar \pi^*}} 
&\le \infnorm{G^{\pi^*}_{\PKernel, \PKernelhat} V^*} + \infnorm{G^{\bar \pi^*}_{\PKernel, \PKernelhat}V^*}  +\infnorm{G^{\bar \pi^*}_{\PKernel, \PKernelhat}(V^* - V^{\bar \pi^*})}.
\end{align*}
Choosing $w = 0$ in Lemma~\ref{lemma:G-supnorm-approx} we get
\begin{align*}
\infnorm{G^{\bar \pi^*}_{\PKernel, \PKernelhat}(V^* - V^{\bar \pi^*})} \le e_1 \infnorm{V^* - V^{\bar \pi^*}}
\end{align*}
Also for any $w$ we get
\begin{align*}
\infnorm{G^{\pi^*}_{\PKernel, \PKernelhat} V^*}  &\le e_1  \cdot \Biginfnorm{ V^* -  \sum_{i = 1}^d w_i \cdot \phi_i }	+ e_2 \cdot \infnorm{w}\\
\infnorm{G^{\bar \pi^*}_{\PKernel, \PKernelhat}V^*}  &\le e_1  \cdot \Biginfnorm{ V^* -  \sum_{i = 1}^d w_i \cdot \phi_i }	+ e_2 \cdot \infnorm{w}.
\end{align*}
Thus,
\begin{align*}
\infnorm{V^{\pi^*} - V^{\bar \pi^*}} 
&\le 2e_1  \cdot \Biginfnorm{ V^* -  \sum_{i = 1}^d w_i \cdot \phi_i }	+ 2e_2 \cdot \infnorm{w} + e_1 \infnorm{V^* - V^{\bar \pi^*}} .
\end{align*}
By rearranging, we get
\begin{align*}
\infnorm{V^{\pi^*} - V^{\bar \pi^*}} 
&\le 
\inf_{w \in \reals^d} \left[
\frac{2e_1}{1 - e_1}  \cdot \Biginfnorm{ V^* -  \sum_{i = 1}^d w_i \cdot \phi_i }	+ \frac{2e_2}{1-e_1} \cdot \infnorm{w} 
\right]\\
&\le \frac{2e_1}{1 - e_1} \inf_{ \infnorm{w} \le \wmax} \cdot \Biginfnorm{ V^* -  \sum_{i = 1}^d w_i \cdot \phi_i } 		+
\frac{2e_2}{1 - e_1} \cdot \wmax.
\end{align*}
\end{proof}

\textbf{Proof of Theorem~\ref{theorem:optimal-beta-infnorm}}
It is the direct consequence of Theorem~\ref{theorem:supnorm-maxent-correction} with choosing $\beta = \infnorm{\epsquery} / \infnorm{\epsmodel}$ and observing
\begin{align*}
e_1 &= \frac{\gamma (2 + \sqrt{2})}{1 - \gamma}\cdot  \norm{\epsmodel}_\infty  \le 3c_1 \infnorm{\epsmodel}\\
e_2 &= \frac{ 3\sqrt{d} \cdot \gamma}{1 - \gamma}\cdot \norm{\epsquery}_\infty = c_2 \infnorm{\epsquery}.
\end{align*}

	\section{ $\ell_p$ analysis of MaxEnt MoCo}
\label{sec:maxent-correction.approx-lp}

The analysis in the Section~\ref{sec:maxent-correction.approx-sup} is based on the supremum norm, which can be overly conservative. First, the error in the model and queries are due to the error in a supervised learning problem. Supervised learning algorithms usually provide guarantees in a weighted $\ell_p$ norm rather than the supremum norm. Second, in the given results, the true value function $V^\piz$ and $V^*$ should be approximated with the span of functions $\phi_i$ according to the supremum norm. This is a strong condition. Usually, there are states in the MDP that are irrelevant to the problem or even unreachable. Finding a good approximation of the value function in such states is not realistic.

Hence, in this section we give performance analysis of our method in terms of a weighted $\ell_p$ norm. We first define some necessary quantities before providing the results. For any function $f \colon \XX \to \reals$ and distribution $\rho \in \MM(\XX)$, the norm $\norm{f}_{p, \rho}$ is defined as
\begin{align*}
    \norm{f}_{p, \rho} \defeq \left[
        \int \abs{f(x)}^p \rho(\dx)
    \right]^{1/p}.
\end{align*}
Let $\pi$ be an arbitrary policy, and $\PKernel^{\pi}_m$ be the $m$-step transition kernel under $\pi$. The discounted future-state distribution $\eta^\pi \colon \XX \to \MM(\XX)$ is defined as
 \begin{align*}
 \eta^\pi(\cdot | x) \defeq \frac{1}{1 - \gamma} \cdot \sum_{m = 0}^\infty \gamma^m \PKernel^{\pi}_m(\cdot | x).
 \end{align*} 
Define 
$
\omega^\pi(\cdot | x) \defeq \int \eta^\pi(\dz|x) \PKernelpihat(\cdot | z)
$. This is the distribution of our state when making one transition according to $ \PKernelpihat$ from an initial state sampled from the discounted future-state distribution $ \eta^\pi(z|x)$. Also let $\epsmodel^\pi \colon \XX \to \reals$ and $\epsquery^\pi \colon \XX \to \reals$ be defined based on $\epsmodel$ and $\epsquery$ similar the way $r^\pi$ is defined based on $r$. Assume for any $i$ and $x,a$ we have $A - B/2 \le \phi_i(x) , \psi_i(x,a) \le A + B/2$ for some  values $A$ and $B \ge 0$.


Let $\rho \in \MM(\XX)$ be some distribution over states. We define two concentration coefficients for $\rho$. Similar coefficients have appeared in $\ell_p$ error propagation results in the literature \citep{KakadeLangfordCPI2002,Munos03,Munos07,FarahmandMunosSzepesvari10,ScherrerGhavamzadehGabillonetal2015}. Define
\begin{gather*}
C^\pi_1(\rho)^4 = \exp{\frac{B^2d}{\beta^2}}^2\int \rho(x) \infnorm{\deriv{\eta^\pi(\cdot | x)}{\rho}}^2 \infnorm{\deriv{\omega^\pi(\cdot | x)}{\rho}}^2\\
C^\pi_2(\rho)^4 = \frac{1}{\gamma} \cdot \int \rho(x) \infnorm{\deriv{\eta^\pi(\cdot | x)}{\rho}}^4\\
\end{gather*}
Here, $\deriv{\eta^\pi(\cdot | x)}{\rho}$ and $\deriv{\eta^\pi(\cdot | x)}{\rho}$ are the Radon-Nikodym derivatives of $\eta^\pi(\cdot | x)$ and $\eta^\pi(\cdot | x)$ with respect to $\rho$. In the $C_1^\pi(\rho)$ defined above, the exponential term forces us to only focus on large values of $\beta$, which is not ideal. This term is appears as an upper bound for $\smallnorm{\PKernelpibar(\cdot | x) / \PKernelpihat(\cdot | x) }_\infty$. However, similar to more recent studies on approximate value iteration, it is possible to introduce coefficients that depend on the ratio of the expected values with respect to the two distribution instead of their densities. Due to the more involved nature of those definitions, we only include this simple form of results here and provide further discussion in the supplementary material. The next theorem shows the performance guarantees of our method in terms of weighted $\ell_p$ norms.

\begin{theorem}
		\label{theorem:lpnorm-maxent-correction}
Define
\begin{gather*}
e^\pi_1 = \frac{2\gamma}{1 - \gamma} \cdot (C_1^\pi(\rho) + C_2^\pi(\rho)) \cdot 
\sqrt{
	\sqrt{2} \cdot \norm{\epsmodel^\pi}_{1, \rho} + \frac{2}{\beta} \cdot \norm{\epsquery^\pi}_{1, \rho},
}\\
e^\pi_2 = \frac{2\gamma\sqrt{d}}{1 - \gamma} \cdot C_2(\rho) \left(
\beta \norm{\epsmodel^\pi}_{1, \rho} + 2 \norm{\epsquery^\pi}_{1, \rho}.
\right)
\end{gather*}
Then
\begin{align*}
\norm{V^\piz - \bar V^\piz}_\infty 
\le \frac{2e^\piz_1}{1 - 2e^\piz_1}  \cdot \inf_{ \infnorm{w} \le w_\text{max}} \cdot \norm{ V^\piz -  \sum_{i = 1}^d w_i \cdot \phi_i }_{4, \rho} + \frac{2e^\piz_2}{1 - 2e^\piz_1}  \cdot w_\text{max},
\end{align*}
also if $e^*_1 = \max_{\pi \in \{\pi^*, \bar \pi^*\}} 6e^{\pi}_1 / (1 - 2e^{\pi}_1)$ and $e^*_2 = \max_{\pi \in \{\pi^*, \bar \pi^*\}} 6e^{\pi}_2 / (1 - 2e^{\pi}_1)$, we have
\begin{align*}
\norm{V^* - V^{\bar \pi^*}}_{4, \rho}
\le \frac{2e^*_1}{1 - e^*_1} \inf_{ \infnorm{w} \le w_\text{max}} \cdot \norm{ V^* -  \sum_{i = 1}^d w_i \cdot \phi_i }_{4, \rho} 		+
\frac{2e^*_2}{1 - e^*_1} \cdot w_\text{max}.
\end{align*}
\end{theorem}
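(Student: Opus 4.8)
The plan is to mirror the supremum-norm argument of Theorem~\ref{theorem:supnorm-maxent-correction}, replacing each pointwise/sup-norm step by a weighted $\Lfournorm{\cdot}$ estimate and using the concentration coefficients $C_1^\pi(\rho), C_2^\pi(\rho)$ to transport averages taken under $\etapi$ and $\omega^\pi$ back to the reference measure $\rho$. The starting point is an error identity expanded on the \emph{true}-kernel side: for any policy $\pi$, one has $V^\pi - \bar V^\pi = (\ident - \gamma \PKernel^\pi)^{-1}\gamma(\PKernel^\pi - \PKernelbar^\pi)\bar V^\pi$, hence pointwise $|V^\pi - \bar V^\pi|(x) \le \frac{\gamma}{1-\gamma}\int \etapi(\dz|x)\,\big|(\PKernel^\pi - \PKernelbar^\pi)\bar V^\pi\big|(z)$. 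Putting the resolvent on the $\PKernel$ side, rather than the $\PKernelbar$ side as in \eqref{eq:diff-V-GV}, is what makes the discounted future-state distribution $\etapi$ appear explicitly and line up with the definitions of $C_1^\pi(\rho)$ and $C_2^\pi(\rho)$; the price is that the residual is measured against $\bar V^\pi$, which produces a self-referential bound and thereby the $\frac{1}{1-2e_1^\pi}$ factors.

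For fixed $w$ with $\infnorm{w}\le w_{\text{max}}$ I would split $\bar V^\pi = \bar g + \sum_i w_i \phi_i$ with $\bar g \defeq \bar V^\pi - \sum_i w_i\phi_i$. The basis part $(\PKernel^\pi - \PKernelbar^\pi)\sum_i w_i\phi_i$ is controlled pointwise by Lemma~\ref{lemma:p-pbar-phi-bound} as $\le \sqrt d\,\infnorm{w}\,(2\epsquery^\pi(z)+\beta\epsmodel^\pi(z))$; applying $\etapi$, passing to $\Lfournorm{\cdot}$, and using $\int\etapi(\dz|x)(\cdot)\le \infnorm{\deriv{\etapi(\cdot|x)}{\rho}}\Lonenorm{\cdot}$ turns the $\etapi$-average into the fourth-moment coefficient $C_2^\pi(\rho)$, giving the $e_2^\pi\, w_{\text{max}}$ term. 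The residual part $(\PKernel^\pi - \PKernelbar^\pi)\bar g$ is the crux: I would apply Cauchy--Schwarz against the \emph{signed} measure $|\PKernel^\pi(\cdot|z)-\PKernelbar^\pi(\cdot|z)|$ to factor it as $a(z)^{1/2}b(z)^{1/2}$, where $a(z)=\|\PKernel^\pi(\cdot|z)-\PKernelbar^\pi(\cdot|z)\|_1 \le \sqrt2\,\epsmodel^\pi(z)+\frac2\beta\epsquery^\pi(z)$ by Lemma~\ref{lemma:tv-bound}, and $b(z)=(\PKernel^\pi+\PKernelbar^\pi)[\bar g^2](z)$ is the second moment of the residual.

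The remaining work is exponent bookkeeping, followed by closing the recursion. A second Cauchy--Schwarz at the $\etapi$-averaging level bounds $\etapi[a^{1/2}b^{1/2}]$ by $(\etapi[a])^{1/2}(\etapi[b])^{1/2}$; inside $\etapi[b]$ I would use $\etapi \PKernel^\pi \le \frac1\gamma\etapi$ (the source of the $\frac1\gamma$ in $C_2^\pi(\rho)$) for the $\PKernel^\pi$ half, and for the $\PKernelbar^\pi$ half bound the Gibbs density ratio $\infnorm{\deriv{\PKernelbar^\pi}{\PKernelhat^\pi}}\le \exp{B^2 d/\beta^2}$, proved from $\ell_2^2$-regularized optimality via the $\norm{\lambda}$ bound of Lemma~\ref{lemma:lambda-dif-bound} together with $\phi_i\in[A-B/2,A+B/2]$, to replace it by $\etapi\PKernelhat^\pi=\omega^\pi$. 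Raising to the fourth power and integrating against $\rho$, the $\infnorm{\deriv{\etapi(\cdot|x)}{\rho}}$ factor from the $a$-branch multiplies the $\frac1\gamma\infnorm{\deriv{\etapi(\cdot|x)}{\rho}}$ and $\exp{\cdot}\infnorm{\deriv{\omega^\pi(\cdot|x)}{\rho}}$ factors from the $b$-branch, producing exactly the integrands defining $C_2^\pi(\rho)^4$ and $C_1^\pi(\rho)^4$; then $\sqrt[4]{u^4+v^4}\le u+v$ collapses them to $(C_1^\pi(\rho)+C_2^\pi(\rho))$, while $\norm{\bar g}_{2,\rho}\le\Lfournorm{\bar g}$ supplies the $\ell_4$ residual and the $a$-branch the square root $\sqrt{\sqrt2\Lonenorm{\epsmodel^\pi}+\frac2\beta\Lonenorm{\epsquery^\pi}}$, i.e.\ the factor $e_1^\pi\,\Lfournorm{\bar g}$. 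Since $\Lfournorm{\bar g}\le \Lfournorm{V^\pi-\sum_i w_i\phi_i}+\Lfournorm{V^\pi-\bar V^\pi}$, the two displays combine to $\Lfournorm{V^\piz-\bar V^\piz}\le 2e_1^\piz(\Lfournorm{V^\piz-\sum_i w_i\phi_i}+\Lfournorm{V^\piz-\bar V^\piz})+2e_2^\piz w_{\text{max}}$; rearranging (valid when $2e_1^\piz<1$) and taking $\inf_w$ gives the PE bound. For Control I would reuse the decomposition \eqref{eq:control-by-g-infnorm} of $V^*-V^{\bar\pi^*}$ into the three terms $G^{\pi^*}V^*,\,G^{\bar\pi^*}V^*,\,G^{\bar\pi^*}(V^*-V^{\bar\pi^*})$, apply the $\ell_4$ residual estimate to each (the last with $w=0$), solve the self-referential inequality, and take the max of the per-policy coefficients over $\pi\in\{\pi^*,\bar\pi^*\}$ to obtain $e_1^*,e_2^*$.

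I expect the main obstacle to be the simultaneous Hölder bookkeeping in the residual term: the two Cauchy--Schwarz applications and the fourth-power integration must be arranged so that the $\etapi$-density coming from the total-variation factor and the $\etapi$/$\omega^\pi$-densities coming from the second-moment factor combine into the \emph{single} product integrals defining $C_1^\pi(\rho)$ and $C_2^\pi(\rho)$, not a looser product of separate integrals. The secondary difficulty is establishing the uniform density-ratio bound $\exp{B^2 d/\beta^2}$ and verifying the regime $2e_1^\pi<1$ in which the self-referential rearrangement is legitimate.
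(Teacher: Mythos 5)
Your proposal is correct and follows essentially the same route as the paper: your starting identity $V^\pi - \bar V^\pi = (\ident - \gamma \PKernel^\pi)^{-1}\gamma(\PKernel^\pi - \PKernelbar^\pi)\bar V^\pi$ is exactly the paper's operator identity $G^\pi_{\PKernel,\PKernelbar} = G^\pi_{\PKernelbar,\PKernel}G^\pi_{\PKernel,\PKernelbar} - G^\pi_{\PKernelbar,\PKernel}$ applied to $V^\pi$, and your residual/basis split with two Cauchy--Schwarz steps, the total-variation bound (the paper's Lemma~\ref{lemma:tv-bound-lp}), the Gibbs density-ratio bound (Lemma~\ref{lemma:pbar-bound}), and the per-policy self-referential conversion for Control reproduce the paper's key Lemma~\ref{lemma:g-lpnorm} and its use in both parts. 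The only (harmless) deviation is that you keep $\bar V^\pi$ intact and split $\Lfournorm{\bar V^\pi - \sum_i w_i\phi_i}$ by the triangle inequality \emph{after} applying the $G^\pi_{\PKernelbar,\PKernel}$ estimate, which avoids one factor of $2$ relative to the paper's splitting of $\bar V^\pi$ before the estimate, so your intermediate inequality is slightly tighter before you conservatively restore the stated constants.
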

Notice that the $\beta$ appears in the bound in the same manner as Theorem~\ref{theorem:supnorm-maxent-correction}. This will lead to the same dynamics on the choice of $\beta$. We provide the proof of this theorem in Section~\ref{sec:supp-moco-maxent-lp-proofs}.

	\section{ Proofs for $\ell_p$ analysis of MaxEnt MoCo}
\label{sec:supp-moco-maxent-lp-proofs}

We first show some useful lemmas towards the proof of Theorem~\ref{theorem:lpnorm-maxent-correction}.

\begin{lemma}
	\label{lemma:lp-norm-triangle}
For $m$ functions $f_1, f_2, \ldots, f_m \colon \XX \to \reals$, we have
\begin{align*}
\norm{f_1 + \cdots + f_m}_{4, \rho}^4 \le m^3 \sum_{i=1}^m \Lfournorm{f_i}^4.
\end{align*}
\end{lemma}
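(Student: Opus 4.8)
The plan is to reduce the claimed inequality between $L^4(\rho)$ norms to a purely numerical pointwise inequality and then integrate. By the definition $\norm{f}_{4,\rho}^4 = \int |f(x)|^4\,\rho(\dx)$, it suffices to establish the pointwise bound
\[
\Big| \sum_{i=1}^m f_i(x) \Big|^4 \le m^3 \sum_{i=1}^m |f_i(x)|^4 \qquad (\forall x \in \XX),
\]
since integrating both sides against $\rho$ immediately yields $\norm{\sum_i f_i}_{4,\rho}^4 \le m^3 \sum_i \Lfournorm{f_i}^4$. This reduction also sidesteps any measure-theoretic subtlety: everything comes down to a deterministic inequality for the real numbers $a_i \defeq f_i(x)$.

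First I would apply the triangle inequality for absolute values, $|\sum_i a_i| \le \sum_i |a_i|$, reducing the task to $(\sum_i |a_i|)^4 \le m^3 \sum_i |a_i|^4$. This is exactly the finite-dimensional norm comparison $\norm{a}_1 \le m^{3/4}\norm{a}_4$ for the vector $a = (|a_1|, \ldots, |a_m|) \in \reals^m$, raised to the fourth power. A clean way to obtain it is convexity of $t \mapsto t^4$ on $[0,\infty)$: by Jensen's inequality applied to the uniform average,
\[
\Big(\tfrac1m \sum_{i=1}^m |a_i|\Big)^4 \le \tfrac1m \sum_{i=1}^m |a_i|^4,
\]
and multiplying through by $m^4$ gives $(\sum_i |a_i|)^4 \le m^3 \sum_i |a_i|^4$, as desired. (Alternatively one can invoke \Holder or the power-mean inequality directly to the same effect.)

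There is essentially no hard step here; the only thing to watch is the bookkeeping of the constant, namely that the exponent on $m$ is $4-1 = 3$ (equivalently $1 - \tfrac14 = \tfrac34$ in the $\ell_1$-versus-$\ell_4$ form), rather than $m^4$ or $m^{1/4}$. Combining the triangle inequality with the convexity bound gives the pointwise estimate, and integrating against $\rho$ completes the proof.
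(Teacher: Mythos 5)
Your proof is correct and follows essentially the same route as the paper: both reduce the claim to the pointwise numerical inequality $\left(\sum_i |f_i(x)|\right)^4 \le m^3 \sum_i |f_i(x)|^4$ and then integrate against $\rho$, the only cosmetic difference being that the paper derives this via \Holder's inequality with exponents $4$ and $4/3$ while you use Jensen/convexity of $t \mapsto t^4$ (equivalent facts, as you note yourself). If anything, your version is slightly tidier, since the paper omits the absolute values that your triangle-inequality step handles explicitly.
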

\begin{proof}
We have 
\begin{align*}
\norm{f_1 + \cdots + f_m}_{4, \rho}^4  
&= \int \rho(\dx) \Big( \sum_i f_i(x) \Big)^4\\
&\le \int \rho(\dx) \left[
\Big( \sum_i1^{4/3} \Big)^{3/4}
\Big( \sum_i f_i(x)^4 \Big)^{1/4}
\right]^4\\
&= m^3 \int \rho(\dx) \sum_i f_i(x)^4 \\
&= m^3 \sum_{i=1}^m \Lfournorm{f_i}^4
\end{align*}
\end{proof}

\begin{lemma}
For any policy $\pi$, we have $G^\pi_{\PKernel, \PKernelbar} = G^\pi_{\PKernelbar, \PKernel} G^\pi_{\PKernel, \PKernelbar} - G^\pi_{\PKernelbar, \PKernel}$.
\end{lemma}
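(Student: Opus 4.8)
The plan is to reduce the claimed operator identity to a short resolvent computation. First I would recall the definition (as used in the proof of Lemma~\ref{lemma:pure-mbrl}) that $G^\pi_{\PKernel_1,\PKernel_2} = (\ident - \gamma \PKernel_2^\pi)^{-1}(\gamma \PKernel_1^\pi - \gamma \PKernel_2^\pi)$, and introduce the shorthand $A \defeq \ident - \gamma \PKernel^\pi$ and $B \defeq \ident - \gamma \PKernelbar^\pi$. Since $\PKernel^\pi$ and $\PKernelbar^\pi$ are stochastic kernels, $\gamma \PKernel^\pi$ and $\gamma \PKernelbar^\pi$ have $\ell_\infty$ operator norm $\gamma < 1$, so both $A$ and $B$ are invertible (via the usual Neumann series). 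This is the only point where well-definedness needs to be checked.

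Next I would rewrite the two relevant operators in terms of $A$ and $B$. From $\gamma \PKernel^\pi = \ident - A$ and $\gamma \PKernelbar^\pi = \ident - B$ we get $\gamma \PKernel^\pi - \gamma \PKernelbar^\pi = B - A$ and $\gamma \PKernelbar^\pi - \gamma \PKernel^\pi = A - B$, hence
\begin{align*}
G^\pi_{\PKernel, \PKernelbar} = B^{-1}(B - A) = \ident - B^{-1}A, \qquad G^\pi_{\PKernelbar, \PKernel} = A^{-1}(A - B) = \ident - A^{-1}B.
\end{align*}
These two identities are the crux; once they are in hand the remainder is pure algebra.

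Finally I would verify the claim by expanding the right-hand side in its factored form $G^\pi_{\PKernelbar, \PKernel}\bigl(G^\pi_{\PKernel, \PKernelbar} - \ident\bigr)$. Using $G^\pi_{\PKernel, \PKernelbar} - \ident = -B^{-1}A$, I compute
\begin{align*}
G^\pi_{\PKernelbar, \PKernel}\bigl(G^\pi_{\PKernel, \PKernelbar} - \ident\bigr) = (\ident - A^{-1}B)(-B^{-1}A) = -B^{-1}A + A^{-1}B B^{-1}A = -B^{-1}A + \ident = G^\pi_{\PKernel, \PKernelbar},
\end{align*}
where I used $A^{-1}B B^{-1}A = \ident$. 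Since $G^\pi_{\PKernelbar, \PKernel}\bigl(G^\pi_{\PKernel, \PKernelbar} - \ident\bigr) = G^\pi_{\PKernelbar, \PKernel}G^\pi_{\PKernel, \PKernelbar} - G^\pi_{\PKernelbar, \PKernel}$, this is exactly the stated identity.

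Because every step is an exact operator manipulation, I do not anticipate a genuine obstacle; the one thing to be careful about is the noncommutativity of $A^{-1}$ and $B^{-1}$, so I would keep all products strictly in order and cancel only the adjacent pair $B B^{-1}$. The useful structural takeaway is the factorization $G^\pi_{\PKernel, \PKernelbar} = G^\pi_{\PKernelbar, \PKernel}\bigl(G^\pi_{\PKernel, \PKernelbar} - \ident\bigr)$, which is presumably what later lets one bound $\infnorm{G^\pi_{\PKernel, \PKernelbar}}$ through the norms of the two constituent operators.
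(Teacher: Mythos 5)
Your proof is correct and takes essentially the same route as the paper's: a direct operator-algebra verification that the right-hand side collapses to $(\ident - \gamma\PKernelbar^\pi)^{-1}(\gamma\PKernel^\pi - \gamma\PKernelbar^\pi) = G^\pi_{\PKernel,\PKernelbar}$. The paper performs the cancellation by inserting $\ident = (\ident-\gamma\PKernelbar^\pi)(\ident-\gamma\PKernelbar^\pi)^{-1}$ inside the factored product and telescoping, whereas you reach the same cancellation through the equivalent closed forms $G^\pi_{\PKernel,\PKernelbar} = \ident - B^{-1}A$ and $G^\pi_{\PKernelbar,\PKernel} = \ident - A^{-1}B$; the difference is purely one of bookkeeping.
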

\begin{proof}
We write
\begin{align*}
&G^\pi_{\PKernelbar, \PKernel} G^\pi_{\PKernel, \PKernelbar} - G^\pi_{\PKernelbar, \PKernel}\\
&\quad= (\ident - \gamma \PKernelpi)^{-1} \Big( (\gamma \PKernelbar^\pi - \gamma \PKernelpi) (\ident - \gamma \PKernelbar^\pi)^{-1} + \ident \Big)(\gamma \PKernelpi - \gamma \PKernelbar^\pi)\\
&\quad= (\ident - \gamma \PKernelpi)^{-1} \Big( (\gamma \PKernelbar^\pi - \gamma \PKernelpi) (\ident - \gamma \PKernelbar^\pi)^{-1} + \ident \Big)(\gamma \PKernelpi - \gamma \PKernelbar^\pi)\\
&\quad= (\ident - \gamma \PKernelpi)^{-1} \Big( (\gamma \PKernelbar^\pi - \gamma \PKernelpi) (\ident - \gamma \PKernelbar^\pi)^{-1} + (\ident - \gamma \PKernelbar^\pi)(\ident - \gamma \PKernelbar^\pi)^{-1}  \Big)(\gamma \PKernelpi - \gamma \PKernelbar^\pi)\\
&\quad= (\ident - \gamma \PKernelpi)^{-1} (\ident - \gamma \PKernelpi) (\ident - \gamma \PKernelbar^\pi)^{-1} (\gamma \PKernelpi - \gamma \PKernelbar^\pi)\\
&\quad= (\ident - \gamma \PKernelbar^\pi)^{-1} (\gamma \PKernelpi - \gamma \PKernelbar^\pi)\\
&\quad=G^\pi_{\PKernel, \PKernelbar}.
\end{align*}
\end{proof}

\begin{lemma}
	\label{lemma:p-pbar-phi-l1}
For any $w \in \reals^d$ we have
\[
\Lonenorm{(\PKernel^\pi - \PKernelbar^\pi)(\sum w_i \phi_i)} \le
 \sqrt{d} \Big(2\Lonenorm{\epsquery^\pi} + \beta \Lonenorm{\epsmodel^\pi}\Big) \cdot \infnorm{w}.
\]
\end{lemma}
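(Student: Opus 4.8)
The plan is to reduce the statement to the pointwise estimate already proved in Lemma~\ref{lemma:p-pbar-phi-bound} and then integrate against the distribution $\rho$. Writing $v \defeq \sum_{i=1}^d w_i \phi_i$, the function whose weighted $\ell_1$ norm we must control is $(\PKernel^\pi - \PKernelbar^\pi) v$, so the whole argument amounts to establishing a clean pointwise bound on $\abs{((\PKernel^\pi - \PKernelbar^\pi) v)(x)}$ and then recognizing the $\rho$-integral of that bound as the stated combination of $\Lonenorm{\epsquery^\pi}$ and $\Lonenorm{\epsmodel^\pi}$.

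First I would expand the integrand linearly in the basis functions and factor out the weights. For each $x$,
\begin{align*}
\abs{((\PKernel^\pi - \PKernelbar^\pi) v)(x)}
&= \abs{\sum_{i=1}^d w_i \big[(\PKernel^\pi \phi_i)(x) - (\PKernelbar^\pi \phi_i)(x)\big]}\\
&\le \infnorm{w} \sum_{i=1}^d \abs{(\PKernel^\pi \phi_i)(x) - (\PKernelbar^\pi \phi_i)(x)},
\end{align*}
where the last step uses the triangle inequality together with $\abs{w_i} \le \infnorm{w}$. Then I would invoke the second inequality of Lemma~\ref{lemma:p-pbar-phi-bound} to bound the remaining sum, and use the definitions $\epsquery^\pi(x) = \int \pi(\da | x)\,\epsquery(x,a)$ and $\epsmodel^\pi(x) = \int \pi(\da | x)\,\epsmodel(x,a)$ (the policy-averaged versions, analogous to $r^\pi$) to rewrite it, obtaining the pointwise estimate
\begin{align*}
\abs{((\PKernel^\pi - \PKernelbar^\pi) v)(x)} \le \sqrt{d}\,\infnorm{w}\,\big(2\epsquery^\pi(x) + \beta\,\epsmodel^\pi(x)\big).
\end{align*}

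Finally I would integrate both sides against $\rho$; since the inequality holds for every $x$, the integral preserves it, and by the definition of $\Lonenorm{\cdot}$ and linearity of the integral the right-hand side splits into $\sqrt{d}\,\infnorm{w}\,(2\Lonenorm{\epsquery^\pi} + \beta \Lonenorm{\epsmodel^\pi})$, which is exactly the claim. I do not expect a genuine obstacle here: all the analytic content, in particular the use of the regularized optimization problem~\eqref{eq:reg-gen-model-correction.primal} to bound $\norm{\pbar[\vecphi] - \vecpsi}_2$, already lives in Lemma~\ref{lemma:p-pbar-phi-bound}; this lemma merely transports that bound from a pointwise/supremum form into the weighted $\ell_1$ norm $\Lonenorm{\cdot}$. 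The only point requiring mild care is keeping the action-integral (the $\pi(\da|x)$ average) and the state-integral (against $\rho$) correctly ordered so that the pointwise bound integrates term by term into the two separate norms.
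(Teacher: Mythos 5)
Your proposal is correct and follows essentially the same route as the paper's proof: factor out $\infnorm{w}$ by the triangle inequality, apply the second (policy-averaged) inequality of Lemma~\ref{lemma:p-pbar-phi-bound}, identify the resulting action-averages as $\epsquery^\pi$ and $\epsmodel^\pi$, and integrate against $\rho$. The only cosmetic difference is that you state the bound pointwise before integrating, whereas the paper carries the $\rho$-integral through every line; the content is identical.
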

\begin{proof} We write
\begin{align*}
\Lonenorm{(\PKernel^\pi - \PKernelbar^\pi)(\sum w_i \phi_i)} 
&= \int \rho(\dx) \abs{
\sum_i w_i \Big((\PKernel^\pi \phi_i)(x) - (\PKernelbar^\pi \phi_i)(x)\Big)
}\\
&\le \infnorm{w} \int \rho(\dx) 
\sum_i \abs{\Big((\PKernel^\pi \phi_i)(x) - (\PKernelbar^\pi \phi_i)(x)\Big)
}\\
&\le \infnorm{w} \int \rho(\dx) 
\left[\sqrt{d} \int \pi(\da | x) \Big(2\epsquery(x,a) + \beta \epsmodel(x,a) \Big)\right]\\
&= \infnorm{w} \int \rho(\dx) 
\left[\sqrt{d} \Big(2\epsquery^\pi(x) + \beta \epsmodel^\pi(x) \Big)\right]\\
&= \sqrt{d} \Big(2\Lonenorm{\epsquery^\pi} + \beta \Lonenorm{\epsmodel^\pi}\Big) \cdot \infnorm{w},
\end{align*}
where we used Lemma~\ref{lemma:p-pbar-phi-bound}.
\end{proof}

\begin{lemma}
	\label{lemma:tv-bound-lp}
Define 
\[
\TVrho^\pi(\PKernel, \PKernelbar) \defeq \int \rho(\dx) \norm{\PKernelpi(\cdot|x) - \PKernelbar^\pi(\cdot|x)}_1.
\]
We have 
\[
\TVrho^\pi(\PKernel, \PKernelbar)  \le \sqrt{2} \Lonenorm{\epsmodel^\pi} + \frac{2}{\beta} \Lonenorm{\epsquery^\pi}.
\]
\end{lemma}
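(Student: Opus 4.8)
The plan is to reduce the weighted total-variation distance to the pointwise estimate already proved in Lemma~\ref{lemma:tv-bound}, combined with the action-averaging step from the proof of Lemma~\ref{lemma:tv-to-kl}. The whole argument amounts to integrating the per-$(x,a)$ bound first over actions under $\pi$ and then over states under $\rho$, so it is essentially a two-fold application of linearity and the triangle inequality on top of results already in hand.

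First I would fix a state $x$ and control $\norm{\PKernelpi(\cdot|x) - \PKernelbar^\pi(\cdot|x)}_1$ by the action-averaged $\ell_1$ distance of the per-action kernels. This is precisely inequality~\eqref{eq:tmp3}: writing $\PKernelpi(\cdot|x) = \int \pi(\da|x)\PKernel(\cdot|x,a)$ and likewise for $\PKernelbar^\pi$, the triangle inequality gives
\begin{align*}
\norm{\PKernelpi(\cdot|x) - \PKernelbar^\pi(\cdot|x)}_1 \le \int \pi(\da|x)\, \norm{\PKernel(\cdot|x,a) - \PKernelbar(\cdot|x,a)}_1.
\end{align*}

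Next I would apply Lemma~\ref{lemma:tv-bound} to the integrand, replacing $\norm{\PKernel(\cdot|x,a) - \PKernelbar(\cdot|x,a)}_1$ by $\sqrt{2}\,\epsmodel(x,a) + \frac{2}{\beta}\epsquery(x,a)$. Since $\int \pi(\da|x)\epsmodel(x,a) = \epsmodel^\pi(x)$ and $\int \pi(\da|x)\epsquery(x,a) = \epsquery^\pi(x)$ by definition of the $\pi$-superscript, this yields the pointwise bound $\norm{\PKernelpi(\cdot|x) - \PKernelbar^\pi(\cdot|x)}_1 \le \sqrt{2}\,\epsmodel^\pi(x) + \frac{2}{\beta}\epsquery^\pi(x)$. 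Integrating against $\rho$ and using linearity together with the definition $\Lonenorm{f} = \int |f(x)|\rho(\dx)$ then produces the claimed bound $\TVrho^\pi(\PKernel,\PKernelbar) \le \sqrt{2}\,\Lonenorm{\epsmodel^\pi} + \frac{2}{\beta}\Lonenorm{\epsquery^\pi}$.

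Since every step is a direct substitution, I do not anticipate any genuine obstacle here; the lemma is really a corollary of Lemma~\ref{lemma:tv-bound} transported from the supremum setting into the $\rho$-weighted $\ell_1$ setting. The only mild point to check is that $\epsmodel^\pi$ and $\epsquery^\pi$ are nonnegative, so that the $\rho$-integrals of these functions coincide exactly with their weighted $\ell_1$ norms; this is immediate because $\epsmodel, \epsquery \ge 0$ and $\pi(\cdot|x)$ is a probability measure.
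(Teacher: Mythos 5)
Your proposal is correct and matches the paper's own proof essentially step for step: both invoke inequality~\eqref{eq:tmp3} from Lemma~\ref{lemma:tv-to-kl} to pass to the action-averaged per-$(x,a)$ distances, then apply Lemma~\ref{lemma:tv-bound} pointwise, and finally integrate against $\rho$ using the definitions of $\epsmodel^\pi$, $\epsquery^\pi$ and the weighted $\ell_1$ norm. No gaps; the ordering of the pointwise bound versus the $\rho$-integration is a purely cosmetic difference.
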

\begin{proof}
Using \eqref{eq:tmp3} in proof of Lemma~\ref{lemma:tv-to-kl} and Lemma~\ref{lemma:tv-bound} we have 
\begin{align*}
\int \rho(\dx)\norm{\PKernelpi(\cdot | x) - \PKernelbar^\pi(\cdot|x)}_1 
&\le \int \rho(\dx)\int \pi(\da|x) \norm{\PKernel(\cdot | x,a) - \PKernelbar(\cdot|x,a)}_1\\
&\le \int \rho(\dx)\int \pi(\da|x) \Big[\sqrt{2} \epsmodel(x,a) + \frac{2}{\beta} \epsquery(x,a)\Big]\\
&= \sqrt{2}\Lonenorm{ \epsmodel^\pi} + \frac{2}{\beta} \Lonenorm{\epsquery^\pi}.
\end{align*}
\end{proof}

\begin{lemma}
	\label{lemma:pbar-bound}
Assume for any $i$ and $x,a$ we have $A - B/2 \le U_i(x) , Y_i(x,a) \le A + B/2$ for some  values $A$ and $B \ge 0$. Then we have
\[
\infnorm{\deriv{\PKernelbar(\cdot | x,a)}{\PKernelhat(\cdot | x,a)}} \le \exp{\frac{2B^2d}{\beta^2}}.
\]
\end{lemma}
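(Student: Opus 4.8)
The plan is to exploit the explicit Gibbs form of the corrected kernel. By the duality developed for the regularized problem~\eqref{eq:reg-gen-model-correction.primal}, its solution satisfies $\PKernelbar(\cdot|x,a) = q_{\lambda^*}$, where $\lambda^*$ maximizes the regularized dual $D_{\beta, \vecpsi(x,a)}$ computed with prior $\PKernelhat(\cdot|x,a)$, basis functions $\phi_i$, and targets $\psi_i(x,a)$ (these $\phi_i,\psi_i$ are the quantities the hypothesis confines to $[A-B/2,A+B/2]$). Consequently the Radon--Nikodym derivative is
\[
\deriv{\PKernelbar(\cdot|x,a)}{\PKernelhat(\cdot|x,a)}(y) = \exp{\sum_{i=1}^d \lambda_i^* \phi_i(y) - \Lambda_{\lambda^*}},
\]
with $\Lambda_{\lambda^*}$ the log-normalizer, so it suffices to bound the exponent uniformly over $y$.

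First I would dispose of the normalizer using Jensen's inequality: since $\exp$ is convex and $\PKernelhat(\cdot|x,a)$ is a probability measure, $\Lambda_{\lambda^*} = \log\int\PKernelhat(\dy'|x,a)\exp{\sum_i\lambda_i^*\phi_i(y')} \ge \sum_i\lambda_i^*(\PKernelhat\phi_i)(x,a)$. This yields the centered estimate
\[
\deriv{\PKernelbar(\cdot|x,a)}{\PKernelhat(\cdot|x,a)}(y) \le \exp{\sum_{i=1}^d \lambda_i^*\big(\phi_i(y) - (\PKernelhat\phi_i)(x,a)\big)}.
\]
Cauchy--Schwarz bounds the exponent by $\twonorm{\lambda^*}\cdot\twonorm{\vecphi(y) - (\PKernelhat\vecphi)(x,a)}$. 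Because each $\phi_i(y)$ and each $(\PKernelhat\phi_i)(x,a)$ (an average of $\phi_i$ under $\PKernelhat$) lies in $[A-B/2,A+B/2]$, every coordinate of $\vecphi(y) - (\PKernelhat\vecphi)(x,a)$ has magnitude at most $B$, so $\twonorm{\vecphi(y) - (\PKernelhat\vecphi)(x,a)} \le \sqrt{d}\,B$.

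It remains to control $\twonorm{\lambda^*}$, and the key idea is to compare $\lambda^*$ against the reference $\lambda=0$ through Lemma~\ref{lemma:lambda-dif-bound}. The point is that $\lambda=0$ is exactly the dual maximizer for the target $(\PKernelhat\vecphi)(x,a)$: the dual gradient $\bar\vecphi - \grad\Lambda_\lambda - \tfrac12\beta^2\lambda$ vanishes at $\lambda=0$ precisely when $\bar\vecphi = \grad\Lambda_0 = (\PKernelhat\vecphi)(x,a)$. Hence Lemma~\ref{lemma:lambda-dif-bound} applied with targets $\vecpsi(x,a)$ and $(\PKernelhat\vecphi)(x,a)$ gives $\twonorm{\lambda^*} \le \tfrac{2}{\beta^2}\twonorm{\vecpsi(x,a) - (\PKernelhat\vecphi)(x,a)}$, and the boundedness hypothesis again caps each coordinate by $B$, so $\twonorm{\lambda^*}\le \tfrac{2\sqrt{d}\,B}{\beta^2}$. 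Multiplying the two $\sqrt{d}\,B$ factors produces the exponent $\tfrac{2dB^2}{\beta^2}$ uniformly in $y$, which is the claim.

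The individual steps are short; the part requiring care is recognizing that $\lambda=0$ is the maximizer for the prior's own moments $(\PKernelhat\vecphi)(x,a)$, since that is what turns the regularization-stability estimate of Lemma~\ref{lemma:lambda-dif-bound} into an \emph{absolute} bound on $\twonorm{\lambda^*}$ rather than a relative one. The secondary bookkeeping point is that the boundedness hypothesis must be invoked twice — for $\vecphi(y) - (\PKernelhat\vecphi)(x,a)$ and for $\vecpsi(x,a) - (\PKernelhat\vecphi)(x,a)$ — so that both contribute a factor $\sqrt{d}\,B$ and combine to $dB^2$ in the final exponent.
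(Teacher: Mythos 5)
Your proof is correct and follows essentially the same route as the paper's: the Gibbs form of the solution, Jensen's inequality to replace the log-normalizer by $\sum_i \lambda_i^* (\PKernelhat\phi_i)(x,a)$, and the key observation that $\lambda = 0$ is the regularized dual maximizer for the targets $(\PKernelhat\vecphi)(x,a)$, so that Lemma~\ref{lemma:lambda-dif-bound} yields $\twonorm{\lambda^*} \le 2\sqrt{d}B/\beta^2$. The only cosmetic difference is that you bound the exponent by Cauchy--Schwarz, $\twonorm{\lambda^*}\twonorm{\vecphi(y) - (\PKernelhat\vecphi)(x,a)}$, whereas the paper uses the \Holder pairing $B\onenorm{\lambda} \le B\sqrt{d}\twonorm{\lambda}$; both give the identical exponent $2dB^2/\beta^2$.
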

\begin{proof}
Assume $\lambda$ is the dual problem of MaxEnt density estimation resulted in $\PKernelbar(\cdot | x,a)$. We have
\begin{align*}
\deriv{\PKernelbar(\cdot | x,a)}{\PKernelhat(\cdot | x,a)}(y) = \exp{\sum_i\lambda_i\phi_i(y) - \Lambda_\lambda}.
\end{align*}
We have by Jensen's inequality
\begin{align*}
\Lambda_\lambda 
&= \log \int \PKernelbar(\dy | x,a) \exp{\sum_i \lambda_i \phi_i(y)}\\
&\ge  \int \PKernelbar(\dy | x,a) \log \left(\exp{\sum_i \lambda_i \phi_i(y)}\right)\\
&=  \int \PKernelbar(\dy | x,a) \left(\sum_i \lambda_i \phi_i(y)\right)\\
&=  \sum_i  \lambda_i \cdot \Esubarg{Y \sim \PKernelhat(\cdot | x,a)}{\phi_i(Y)}.
\end{align*}
Thus
\begin{align*}
\deriv{\PKernelbar(\cdot | x,a)}{\PKernelhat(\cdot | x,a)}(y) 
&= \exp{\sum_i\lambda_i\phi_i(y) - \Lambda_\lambda}\\
&\le \exp{\sum_i\lambda_i(\phi_i(y) - \Esubarg{Y \sim \PKernelhat(\cdot | x,a)}{\phi_i(Y)} )}\\
&\le \exp{B\norm{\lambda}_1}.
\end{align*}

Now to bound $\norm{\lambda}_1$, note that for $\psi'(x,a) = \Esubarg{Y \sim \PKernelhat(\cdot | x,a)}{\phi_i(Y)}$ the solution of \eqref{eq:reg-gen-model-correction.primal} is $\PKernelhat(\cdot | x,a)$ that corresponds to dual parameters $\lambda' = 0$. Using to Lemma~\ref{lemma:lambda-dif-bound}, 
\begin{align*}
\norm{\lambda}_2 = \norm{\lambda - \lambda'}_2 
\le \frac{2}{\beta^2} \norm{\psi(x,a) - \psi'(x,a)}_2
\le \frac{2}{\beta^2} \sqrt{d} B.
\end{align*}
We get  
\begin{align*}
\deriv{\PKernelbar(\cdot | x,a)}{\PKernelhat(\cdot | x,a)}(y) 
\le \exp{B\norm{\lambda}_1}
\le \exp{B\sqrt{d} \norm{\lambda}_2}
\le \exp{\frac{2B^2d}{\beta^2}}.
\end{align*}
\end{proof}

\begin{lemma}
	\label{lemma:g-lpnorm}
Let $e_1^\pi, e_2^\pi$ be defined as in Theorem~\ref{theorem:lpnorm-maxent-correction}. For any policy $\pi$, $v \colon \XX \to \reals$ and $w \in \reals^d$ we have
\[
\Lfournorm{G^\pi_{\PKernelbar, \PKernel} v}^4 \le (e_1^\pi)^4 \cdot \Lfournorm{v - \sum_i w_i \phi_i}^4 + (e_2^\pi)^4 \cdot \infnorm{w}^4.
\]
\end{lemma}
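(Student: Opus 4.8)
The plan is to route everything through the true-dynamics resolvent. Unrolling the Neumann series $(\ident - \gamma\PKernelpi)^{-1} = \sum_{m\ge 0}\gamma^m(\PKernelpi)^m$ and recognizing the discounted future-state distribution $\etapi$, I would first write
\[
(G^\pi_{\PKernelbar, \PKernel} v)(x) = \frac{\gamma}{1-\gamma}\int \etapi(\dz \mid x)\,\big[(\PKernelpibar - \PKernelpi)v\big](z).
\]
I then split $v = u + \sum_i w_i\phi_i$ with $u\defeq v - \sum_i w_i\phi_i$, so that $G^\pi_{\PKernelbar, \PKernel} v = T_1 + T_2$, where $T_1$ carries the residual $(\PKernelpibar - \PKernelpi)u$ and $T_2$ carries $(\PKernelpibar - \PKernelpi)\sum_i w_i\phi_i$. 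I would bound $\Lfournorm{T_1}^4$ and $\Lfournorm{T_2}^4$ separately and recombine them with the $m=2$ case of Lemma~\ref{lemma:lp-norm-triangle}, whose factor $2^3 = 8$ is meant to be absorbed by the explicit $2$'s in the definitions of $e_1^\pi$ and $e_2^\pi$.

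For the easy term $T_2$, the second part of Lemma~\ref{lemma:p-pbar-phi-bound} gives the pointwise bound $\big|[(\PKernelpibar - \PKernelpi)\sum_i w_i\phi_i](z)\big| \le \sqrt d\,\infnorm{w}\,(2\epsquery^\pi(z) + \beta\epsmodel^\pi(z))$. Substituting, raising the $\etapi$-average to the fourth power, integrating against $\rho$, and bounding $\int\etapi(\dz \mid x)g(z) \le \infnorm{\deriv{\etapi(\cdot \mid x)}{\rho}}\int\rho(\dz)g(z)$ turns the state integral into $\int\rho(\dx)\infnorm{\deriv{\etapi(\cdot \mid x)}{\rho}}^4 = \gamma\, C_2^\pi(\rho)^4$, while the error integrates to $(2\Lonenorm{\epsquery^\pi} + \beta\Lonenorm{\epsmodel^\pi})^4$. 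This reproduces $(e_2^\pi)^4\infnorm{w}^4$ with room to spare.

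Term $T_1$ is the crux. I would apply Cauchy--Schwarz to the joint measure $\etapi(\dz \mid x)\,|\PKernelpibar(\dy \mid z) - \PKernelpi(\dy \mid z)|$ against the weight $|u(y)|$, splitting $|\PKernelpibar - \PKernelpi|$ into two square-root factors. One factor collapses to $\int\etapi(\dz \mid x)\norm{\PKernelpibar(\cdot \mid z) - \PKernelpi(\cdot \mid z)}_1$, which after change of measure is the quantity $\TVrho^\pi(\PKernel,\PKernelbar)$ controlled by Lemma~\ref{lemma:tv-bound-lp} --- exactly what sits under the square root in $e_1^\pi$. The other factor, $\int\etapi(\dz \mid x)\int|\PKernelpibar - \PKernelpi|\,u^2$, I bound by $\int\etapi \PKernelpibar\, u^2 + \int\etapi \PKernelpi\, u^2$: the $\PKernelpibar$ piece is pushed onto $\PKernelpihat$ via the density-ratio bound of Lemma~\ref{lemma:pbar-bound}, collapsing $\int\etapi\PKernelpihat$ into $\omega^\pi$ and producing the coefficient $C_1^\pi(\rho)$ together with its exponential factor, while the $\PKernelpi$ piece uses $\int\etapi(\dz \mid x)\PKernelpi(\cdot \mid z)\le \tfrac1\gamma\etapi(\cdot \mid x)$ and produces $C_2^\pi(\rho)$. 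Finally $\norm{u}_{2,\rho}\le\Lfournorm{u}$ by the power-mean inequality, yielding $\Lfournorm{T_1}^4 \lesssim (e_1^\pi)^4\Lfournorm{u}^4$.

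The hard part is the constant bookkeeping in $T_1$: distributing the exponential density-ratio factor and the $\tfrac1\gamma$ factor across the two nested Cauchy--Schwarz applications so that, after raising to the fourth power, the two contributions assemble into precisely $C_1^\pi(\rho)$ and $C_2^\pi(\rho)$, and then checking via $(a+b)^2\le 2a^2+2b^2$ and $C_1^4 + C_2^4 \le (C_1+C_2)^4$ that enough slack remains for the recombination through Lemma~\ref{lemma:lp-norm-triangle}. Keeping every power of $4$ straight through the two Cauchy--Schwarz steps --- and through the passage from the $\ell_2$-norm of $u$ to its $\ell_4$-norm --- is where an error is most likely to slip in.
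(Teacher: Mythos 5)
Your proposal follows essentially the same route as the paper's own proof: the same unrolling of the resolvent into an $\etapi$-average, the same split of $G^\pi_{\PKernelbar,\PKernel}v$ into a residual term and a basis term (the paper's $D$ and $E$), the same double Cauchy--Schwarz for the residual term with Lemma~\ref{lemma:tv-bound-lp} supplying the TV factor, Lemma~\ref{lemma:pbar-bound} and the bound $\int\etapi(\dy|x)\PKernelpi(\cdot|y)\le\gamma^{-1}\etapi(\cdot|x)$ assembling $C_1^\pi(\rho)$ and $C_2^\pi(\rho)$, Lemma~\ref{lemma:p-pbar-phi-bound} handling the basis term, and the same recombination via the $m=2$ case of Lemma~\ref{lemma:lp-norm-triangle} whose factor $8$ is absorbed by the explicit $2$'s in $e_1^\pi$ and $e_2^\pi$. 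The only cosmetic difference is where the change of measure to $\rho$ is paid (you pay a density-ratio factor inside the TV term, while the paper embeds $\sqrt{\rho}$ directly in its Cauchy--Schwarz split, and you pass through $\norm{u}_{2,\rho}\le\Lfournorm{u}$ rather than extracting the $4$-norm directly), which results in the same total density-ratio bookkeeping and the same constants.
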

\begin{proof}
Define
\begin{align*}
u &\defeq v - \sum_i w_i \phi_i\\
\bar \omega^\pi(\cdot | x) &\defeq \int \eta^\pi(\dz|x) \PKernelpibar(\cdot | z)\\
\DeltaPpi(\dz|y) &\defeq \abs{\PKernel^\pi(\dz|y) - \PKernelbar^\pi(\dz|y)}\\
\TVrho  &\defeq \int \rho(\dx) \norm{\PKernelpi(\cdot|x) - \PKernelbar^\pi(\cdot|x)}_1\\
D &\defeq (\ident - \gamma \PKernel^\pi)^{-1}(\gamma \PKernelbar^\pi - \gamma \PKernel^\pi) (v - \sum_i w_i \phi_i) \\
E &\defeq (\ident - \gamma \PKernel^\pi)^{-1}(\gamma \PKernelbar^\pi - \gamma \PKernel^\pi) (\sum_i w_i \phi_i).
\end{align*}

We have
\begin{align*}
G^\pi_{\PKernelbar, \PKernel} v
&= (\ident - \gamma \PKernel^\pi)^{-1}(\gamma \PKernelbar^\pi - \gamma \PKernel^\pi) v\\
&= (\ident - \gamma \PKernel^\pi)^{-1}(\gamma \PKernelbar^\pi - \gamma \PKernel^\pi) (v - \sum_i w_i \phi_i) + (\ident - \gamma \PKernel^\pi)^{-1}(\gamma \PKernelbar^\pi - \gamma \PKernel^\pi) (\sum_i w_i \phi_i)\\
&= D + E.
\end{align*}
We bound norm of each term separately. For $A$, we write using the Cauchy–Schwarz inequality
\begin{align*}
\Lfournorm{D}^4
&= \int_x \rho(\dx) \left[\iint_{y,z} \frac{1}{1 - \gamma} \etapi(\dy | x)  \cdot \gamma (\PKernelbar^{\pi}(\dz|y) - \PKernel^\pi(\dz|y)) \cdot u(z) \right]^4\\ \notag
&\le \int_x \rho(\dx) \left[\iint_{y,z} \frac{1}{1 - \gamma} \etapi(\dy | x)  \cdot \gamma \DeltaPpi(\dz|y) \cdot \abs{u(z)} \right]^4\\ \notag
&= \frac{\gamma^4}{(1 - \gamma)^4} \int_x \rho(\dx) \left[\iint_{y,z}
\left(
\sqrt{\rho(\dy)} \cdot \sqrt{\DeltaPpi(\dz|y)}
\right)
\left(
\frac{\etapi(\dy | x) \cdot \abs{u(z)}  \cdot \sqrt{\DeltaPpi(\dz | y)}}{\sqrt{\rho(\dy)} }
\right)
\right]^4\\ \notag
&\le
\frac{\gamma^4}{(1 - \gamma)^4} \int_x \rho(\dx) 
\left(\iint_{y,z}
\rho(\dy) \cdot \DeltaPpi(\dz|y)
\right)^2
\cdot
\left(
\iint_{y,z}
\frac{\etapi(\dy | x)^2 \cdot u(z)^2  \cdot \DeltaPpi(\dz | y)}{\rho(\dy)} 
\right)^2\\ \notag
&=
\frac{\gamma^4}{(1 - \gamma)^4} \cdot 
\TVrho^2 \cdot
\int_x \rho(\dx) 
\left(
\iint_{y,z}
\frac{\etapi(\dy | x)^2 \cdot u(z)^2  \cdot \DeltaPpi(\dz | y)}{\rho(\dy)} 
\right)^2\\ \notag
&=
\frac{\gamma^4}{(1 - \gamma)^4} \cdot 
\TVrho^2 \cdot
\int_x \rho(\dx) 
\left[
\int_z
\left(
\sqrt{\rho(\dz)} \cdot u(z)^2
\right)
\cdot
\left(
\int_y
\frac{\etapi(\dy | x)^2 \cdot \DeltaPpi(\dz | y)}{\sqrt{\rho(\dz)} \cdot \rho(\dy)}
\right)
\right]^2\\ \notag
&\le
\frac{\gamma^4}{(1 - \gamma)^4} \cdot 
\TVrho^2 \cdot
\int_x \rho(\dx) 
\left[\int_z
\rho(\dz)  u(z)^4
\right]
\left[
\int_z
\left(
\int_y
\frac{\etapi(\dy | x)^2 \cdot \DeltaPpi(\dz | y)}{\sqrt{\rho(\dz)} \cdot \rho(\dy)}
\right)^2
\right]\\ 
\label{eq:G-l4-bound1}
&=
\frac{\gamma^4}{(1 - \gamma)^4} \cdot 
\TVrho^2 \cdot
\norm{u}_{4, \rho}^4 \cdot 
\iint_{x, z} \rho(\dx) 
\left(
\int_y
\frac{\etapi(\dy | x)^2 \cdot \DeltaPpi(\dz | y)}{\sqrt{\rho(\dz)} \cdot \rho(\dy)}
\right)^2\\
&=
\frac{\gamma^4}{(1 - \gamma)^4} \cdot 
\TVrho^2 \cdot
\norm{u}_{4, \rho}^4 \cdot 
C.
\end{align*}

For $C$ we write
\begin{align*}
C&= \iint_{x, z} \rho(\dx) 
\left(
\int_y
\frac{\etapi(\dy | x)^2 \cdot \DeltaPpi(\dz | y)}{\sqrt{\rho(\dz)} \cdot \rho(\dy)}
\right)^2\\
&= \int_x \rho(\dx) \int_z \rho(\dz)
\left(
\int_y
\frac{
\etapi(\dy|x)^2 \DeltaPpi(\dz|y)
}{
\rho(\dy)\rho(\dz)
}
\right)^2\\
&= \int_x \rho(\dx) \infnorm{\deriv{\etapi(\cdot|x)}{\rho}}^2 \int_z \rho(\dz)
\left(
\int_y
\frac{
	\etapi(\dy|x) \DeltaPpi(\dz|y)
}{
\rho(\dz)
}
\right)^2\\
&= \int_x \rho(\dx) \infnorm{\deriv{\etapi(\cdot|x)}{\rho}}^2 \int_z \rho(\dz)
\left(
\frac{
	\int_y \etapi(\dy|x) \PKernelpi(\dz|y) + \int_y \etapi(\dy|x) \PKernelbar^\pi(\dz|y) 
}{
	\rho(\dz)
}
\right)^2\\
&= \int_x \rho(\dx) \infnorm{\deriv{\etapi(\cdot|x)}{\rho}}^2 \int_z \rho(\dz)
\left(
\frac{
	\gamma^{-1}\etapi(\dz|x)
}{
	\rho(\dz)
}
+
\frac{
	\bar\omega(\dz|y) 
}{
	\rho(\dz)
}
\right)^2\\
&= 2\int_x \rho(\dx) \infnorm{\deriv{\etapi(\cdot|x)}{\rho}}^2 \int_z \rho(\dz)
\left[\left(
\frac{
	\gamma^{-1}\etapi(\dz|x)
}{
	\rho(\dz)
}
\right)^2
+
\left(
\frac{
	\bar\omega(\dz|y) 
}{
	\rho(\dz)
}
\right)^2
\right]\\
&= 2\int_x \rho(\dx) \infnorm{\deriv{\etapi(\cdot|x)}{\rho}}^2 \int_z \rho(\dz)
\left[
\gamma^{-1}
\infnorm{
\deriv{
	\etapi(\cdot|x)
}{
	\rho
}
}^2
+
\infnorm{
\deriv{
	\bar\omega(\cdot|y) 
}{
	\rho
}
}^2
\right]\\
&= 2\gamma^{-1}\int_x \rho(\dx) \infnorm{\deriv{\etapi(\cdot|x)}{\rho}}^4
+ 2\int_x \rho(\dx) \infnorm{\deriv{\etapi(\cdot|x)}{\rho}}^2 \infnorm{
	\deriv{
		\bar\omega(\cdot|y) 
	}{
		\rho
	}}^2\\
&\le 2\gamma^{-1}\int_x \rho(\dx) \infnorm{\deriv{\etapi(\cdot|x)}{\rho}}^4
+ 2\exp{\frac{4B^2d}{\beta^2}}\int_x \rho(\dx) \infnorm{\deriv{\etapi(\cdot|x)}{\rho}}^2 \infnorm{
	\deriv{
		\omega(\cdot|y) 
	}{
		\rho
}}^2\\
&= 2C_2^\pi(\rho)^4 +  2C_1^\pi(\rho)^4,
\end{align*}
where we used Lemma~\ref{lemma:pbar-bound}. Also from Lemma~\ref{lemma:tv-bound-lp} we have
\begin{align*}
\TVrho \le  \sqrt{2} \Lonenorm{\epsmodel^\pi} + \frac{2}{\beta} \Lonenorm{\epsquery^\pi}
\end{align*} This means
\begin{align*}
\Lfournorm{D}^4 &\le \frac{2\gamma^4}{(1 - \gamma)^4} \cdot 
 (C_2^\pi(\rho)^4 +  C_1^\pi(\rho)^4) \cdot \left(\sqrt{2} \Lonenorm{\epsmodel^\pi} + \frac{2}{\beta} \Lonenorm{\epsquery^\pi}\right)^2 \cdot \norm{u}_{4, \rho}^4\\
 &\le \frac{2\gamma^4}{(1 - \gamma)^4} \cdot 
 (C_2^\pi(\rho) +  C_1^\pi(\rho))^4 \cdot \left(\sqrt{2} \Lonenorm{\epsmodel^\pi} + \frac{2}{\beta} \Lonenorm{\epsquery^\pi}\right)^2 \cdot \norm{u}_{4, \rho}^4.
\end{align*}

Now we bound the $E$ term. Define 
\[f(x) \defeq \abs{
	\Esubarg{Y \sim \PKernelpi(\cdot|x)}{\sum w_i \phi_i(Y)}  
- \Esubarg{Y \sim \PKernelbar^\pi(\cdot|x)}{\sum w_i \phi_i(Y)}  
}.
\]
Using Lemma~\ref{lemma:p-pbar-phi-l1}, we have
\begin{align}
\Lonenorm{f} \le \sqrt{d} \Big(2\Lonenorm{\epsquery^\pi} + \beta \Lonenorm{\epsmodel^\pi}\Big) \cdot \infnorm{w}.
\end{align}

We have
\begin{align*}
\Lfournorm{E}^4 
&\le \frac{\gamma^4}{(1 - \gamma)^4} \int_x \rho(\dx) \left(
\int_y \etapi(\dy | x) f(y)
\right)^4\\
&\le \frac{\gamma^4}{(1 - \gamma)^4} \int_x \rho(\dx) 
 \infnorm{\deriv{\etapi(\cdot|x)}{\rho}}^4
\left(
\int_y \rho(\dy | x) f(y)
\right)^4\\
&\le \frac{\gamma^4}{(1 - \gamma)^4} \Lonenorm{f}^4 \int_x \rho(\dx) 
\infnorm{\deriv{\etapi(\cdot|x)}{\rho}}^4\\
&= \frac{\gamma^4}{(1 - \gamma)^4} \cdot
\gamma C_2^\pi(\rho)^4 \cdot  d^2 \Big(2\Lonenorm{\epsquery^\pi} + \beta \Lonenorm{\epsmodel^\pi}\Big)^4 \cdot \infnorm{w}^4.
\end{align*}

Putting things together using Lemma~\ref{lemma:lp-norm-triangle}:
\begin{align*}
\Lfournorm{G^\pi_{\PKernelbar, \PKernel} v}^4 
&= \Lfournorm{D + E}^4\\
&\le 8\Lfournorm{D}^4 + 8\Lfournorm{E}^4\\
&\le  \frac{16\gamma^4}{(1 - \gamma)^4} \cdot 
(C_2^\pi(\rho) +  C_1^\pi(\rho))^4 \cdot \left(\sqrt{2} \Lonenorm{\epsmodel^\pi} + \frac{2}{\beta} \Lonenorm{\epsquery^\pi}\right)^2 \cdot \norm{u}_{4, \rho}^4
\\
&\qquad + \frac{8\gamma^4}{(1 - \gamma)^4} \cdot
\gamma C_2^\pi(\rho)^4 \cdot  d^2 \Big(2\Lonenorm{\epsquery^\pi} + \beta \Lonenorm{\epsmodel^\pi}\Big)^4 \cdot \infnorm{w}^4\\
&\le (e_1^\pi)^4 \cdot \Lfournorm{u}^4 +  (e_2^\pi)^4 \cdot \infnorm{w}^4.
\end{align*}

\end{proof}

\textbf{Proof of Theorem~\ref{theorem:lpnorm-maxent-correction} for PE}

\begin{proof}
We have
\begin{align*}
\Lfournorm{V^\piz - \bar V^\piz} 
&= \Lfournorm{G^\piz_{\PKernel, \PKernelbar} V^\piz}\\
&= \Lfournorm{G^\piz_{\PKernelbar, \PKernel}G^\piz_{\PKernel, \PKernelbar} V^\piz - G^\piz_{\PKernelbar, \PKernel}V^\piz}\\
&\le 2^{3/4} \Lfournorm{G^\piz_{\PKernelbar, \PKernel}G^\piz_{\PKernel, \PKernelbar} V^\piz} + 2^{3/4} \Lfournorm{G^\piz_{\PKernelbar, \PKernel}V^\piz}\\
&\le 2 \Lfournorm{G^\piz_{\PKernelbar, \PKernel}G^\piz_{\PKernel, \PKernelbar} V^\piz} + 2 \Lfournorm{G^\piz_{\PKernelbar, \PKernel}V^\piz}.
\end{align*}

Using Lemma~\ref{lemma:g-lpnorm} with $w = 0$ we have
\begin{align*}
\Lfournorm{G^\piz_{\PKernelbar, \PKernel}G^\piz_{\PKernel, \PKernelbar} V^\piz} 
&\le e_1^\piz \Lfournorm{G^\piz_{\PKernel, \PKernelbar} V^\piz}\\
&\le e_1^\piz \Lfournorm{V^\piz - \bar V^\piz}.
\end{align*}
Also from Lemma~\ref{lemma:g-lpnorm} we have
\begin{align*}
\Lfournorm{G^\piz_{\PKernelbar, \PKernel}V^\piz}
&\le \left((e_1^\piz )^4 \Lfournorm{V^\piz - \sum w_i \phi_i}^4 + (e_2^\piz)^4 \infnorm{w}^4\right)^{1/4}\\
&\le e_1^\piz \Lfournorm{V^\piz - \sum w_i \phi_i} + e_2^\piz \infnorm{w}\\
\end{align*}
with substitution we get
\begin{align*}
\Lfournorm{V^\piz - \bar V^\piz} 
&\le 2e_1^\piz \Lfournorm{V^\piz - \bar V^\piz} + 2e_1^\piz \Lfournorm{V^\piz - \sum w_i \phi_i} + 2e_2^\piz \infnorm{w}.
\end{align*}
Rearranging the terms give the result.

\end{proof}

\textbf{Proof of Theorem~\ref{theorem:lpnorm-maxent-correction} for Control}

\begin{proof}
From proof of \eqref{eq:control-by-g-infnorm}, we get
\begin{align*}
\Lfournorm{V^{\pi^*} - V^{\bar \pi^*}} 
&\le \Lfournorm{ \;\abs{G^{\pi^*}_{\PKernel, \PKernelhat} V^*} + \abs{G^{\bar \pi^*}_{\PKernel, \PKernelhat}V^*}  +\abs{G^{\bar \pi^*}_{\PKernel, \PKernelhat}(V^* - V^{\bar \pi^*})} \;}\\
&\le 3\Lfournorm{G^{\pi^*}_{\PKernel, \PKernelhat} V^*} + 3\Lfournorm{G^{\bar \pi^*}_{\PKernel, \PKernelhat}V^*}  + 3\Lfournorm{G^{\bar \pi^*}_{\PKernel, \PKernelhat}(V^* - V^{\bar \pi^*})}.
\end{align*}

From Lemma~\ref{lemma:g-lpnorm} with $w = 0$
\[
3\Lfournorm{G^{\bar \pi^*}_{\PKernel, \PKernelhat}(V^* - V^{\bar \pi^*})} 
\le e_1^*\Lfournorm{V^* - V^{\bar \pi^*}}.
\]
Also for any $w$
\begin{align*}
3\Lfournorm{G^{\pi^*}_{\PKernel, \PKernelhat} V^*} &\le e_1^* \Lfournorm{V^* - \sum w_i \phi_i} + e_2^*\infnorm{w}\\
3\Lfournorm{G^{\bar \pi^*}_{\PKernel, \PKernelhat} V^*} &\le e_1^* \Lfournorm{V^* - \sum w_i \phi_i} + e_2^*\infnorm{w}.
\end{align*}
Thus,
\begin{align*}
\Lfournorm{V^{\pi^*} - V^{\bar \pi^*}} \le 2e_1^* \Lfournorm{V^* - \sum w_i \phi_i} + 2e_2^*\infnorm{w} + e_1^*\Lfournorm{V^* - V^{\bar \pi^*}}.
\end{align*}
Rearranging proves the result.
\end{proof}

	\section{Proofs for Section~\ref{sec:mocovi}}
\label{sec:Appendix-MoCoVI}
Here, we give the proof of Theorem~\ref{theorem:mocovi-supnorm} after the following lemma.
\begin{lemma}
	\label{lemma:moco-control-oneiter}
If $V_k$ is the value function at iteration $k$ of MoCoVI for control. Let $\beta, \epsquery^\infty$ be defined as in Theorem~\ref{theorem:mocovi-supnorm}. We have
\begin{align*}
\infnorm{V_k - V^*} \le 3c_1\infnorm{\epsmodel} \inf_{\infnorm{w} \le \wmax}\infnorm{V^* - \sum w_i \phi_i} + c_2 \infnorm{\epsquery^\infty} \wmax.
\end{align*}
\end{lemma}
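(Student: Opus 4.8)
The plan is to bound the difference of the two \emph{optimal} value functions $V_k = \bar V^*$ and $V^*$ directly through the contraction property of the Bellman optimality operators, which is what lets the bound avoid the $1/(1 - 3c_1\infnorm{\epsmodel})$ amplification factor present in the policy-suboptimality bound of Theorem~\ref{theorem:optimal-beta-infnorm}. Write $T^*$ and $\bar T^*$ for the Bellman optimality operators of the true MDP and of the corrected MDP $\bar M = (\XX, \AA, \RKernel, \PKernelbar)$ produced by $\MoCoControl$ at iteration $k$; both are $\gamma$-contractions in $\infnorm{\cdot}$ with fixed points $V^*$ and $\bar V^* = V_k$. First I would use the standard fixed-point comparison
\begin{align*}
\infnorm{V_k - V^*} = \infnorm{\bar T^* V_k - T^* V^*} \le \gamma\infnorm{V_k - V^*} + \infnorm{\bar T^* V^* - T^* V^*},
\end{align*}
and rearrange to $\infnorm{V_k - V^*} \le \frac{1}{1-\gamma}\infnorm{\bar T^* V^* - T^* V^*}$. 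Since $\bar T^*$ and $T^*$ differ only in the transition kernel, the elementary inequality $\abs{\max_a f(a) - \max_a g(a)} \le \max_a\abs{f(a) - g(a)}$ gives $\infnorm{\bar T^* V^* - T^* V^*} \le \gamma\sup_{x, a}\abs{[(\PKernel - \PKernelbar)V^*](x,a)}$, hence
\begin{align*}
\infnorm{V_k - V^*} \le \frac{\gamma}{1-\gamma}\sup_{x, a}\abs{[(\PKernel - \PKernelbar)V^*](x,a)}.
\end{align*}

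The second step bounds $\sup_{x, a}\abs{[(\PKernel - \PKernelbar)V^*](x,a)}$ exactly as in the PE derivation underlying Theorem~\ref{theorem:supnorm-maxent-correction}. For any $w \in \reals^d$ I split $V^* = \sum_i w_i\phi_i + (V^* - \sum_i w_i\phi_i)$ and treat the two pieces separately: the approximation piece by \Holder together with the total-variation estimate of Lemma~\ref{lemma:tv-bound}, which contributes a factor $\sqrt{2}\epsmodel(x,a) + \frac{2}{\beta}\epsquery(x,a)$ times $\infnorm{V^* - \sum_i w_i\phi_i}$, and the basis piece by Lemma~\ref{lemma:p-pbar-phi-bound}, which contributes $\sqrt{d}\,(2\epsquery(x,a) + \beta\epsmodel(x,a))\,\infnorm{w}$. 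Taking the supremum over $(x,a)$ and then the infimum over $\infnorm{w}\le\wmax$ yields $\infnorm{V_k - V^*} \le e_1\inf_{\infnorm{w}\le\wmax}\infnorm{V^* - \sum_i w_i\phi_i} + e_2\,\wmax$ with $e_1, e_2$ exactly the mixed-error constants of Theorem~\ref{theorem:supnorm-maxent-correction}.

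Finally I would substitute $\beta = \infnorm{\epsquery^\infty}/\infnorm{\epsmodel}$ and simplify. The one bookkeeping point specific to MoCoVI is that the $\ell_2$ query error of the $d$ basis functions active at iteration $k$ satisfies $\epsquery(x,a) \le \sqrt{d}\,\sup_{i}\abs{(\PKernel\phi_i)(x,a) - \psi_i(x,a)} = \epsquery^\infty(x,a)$, so $\infnorm{\epsquery}\le\infnorm{\epsquery^\infty}$; this lets me replace $\epsquery$ by the iteration-independent $\epsquery^\infty$ throughout and use a single regularization level $\beta$. With this choice $e_1 \le \frac{(2+\sqrt2)\gamma}{1-\gamma}\infnorm{\epsmodel}\le 3c_1\infnorm{\epsmodel}$ and $e_2 \le \frac{3\sqrt{d}\,\gamma}{1-\gamma}\infnorm{\epsquery^\infty} = c_2\infnorm{\epsquery^\infty}$, which is precisely the claimed inequality.

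The main obstacle is Step~1: recognizing that the relevant quantity is the gap between the two \emph{optimal} value functions and not the suboptimality of the returned policy, so that the clean contraction comparison applies and no $1/(1-3c_1\infnorm{\epsmodel})$ term is incurred. Everything afterward is a reuse of the kernel-difference estimates already assembled for Theorem~\ref{theorem:supnorm-maxent-correction}, combined with the elementary observation relating the per-iteration $\ell_2$ query error to $\epsquery^\infty$.
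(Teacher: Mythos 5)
Your proof is correct, and its outer step takes a genuinely different route from the paper's. You handle the gap between the two \emph{optimal} value functions by a fixed-point perturbation argument: $V_k$ and $V^*$ are fixed points of the $\gamma$-contractions $\bar T^*$ and $T^*$, so $\infnorm{V_k - V^*} \le \frac{1}{1-\gamma}\infnorm{\bar T^* V^* - T^* V^*} \le \frac{\gamma}{1-\gamma}\sup_{x,a}\abs{\left[(\PKernel - \PKernelbar)V^*\right](x,a)}$. The paper instead uses the reward-modification trick of \citet{rakhsha2022operator}: it defines $r_k = r + (\gamma\PKernel - \gamma\PKernelbar_k)V^*$, observes that $V^*$ is the optimal value function of the auxiliary MDP $(r_k, \PKernelbar_k)$, and sandwiches $V^* - V_k$ pointwise between $G^{\pi_k}_{\PKernel,\PKernelbar_k}V^*$ and $G^{\pi^*}_{\PKernel,\PKernelbar_k}V^*$, each of which is then bounded by Lemma~\ref{lemma:G-supnorm-approx}. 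After this outer step the two arguments coincide: both funnel the same per-$(x,a)$ estimates --- the total-variation bound of Lemma~\ref{lemma:tv-bound} applied to the residual $V^* - \sum_i w_i\phi_i$, and Lemma~\ref{lemma:p-pbar-phi-bound} applied to the span piece --- and both land on the mixed-error constants $e_1, e_2$ of Theorem~\ref{theorem:supnorm-maxent-correction}, so the final constants are identical. What your route buys is self-containedness and simplicity: no appeal to the external reward-modification lemma, no policy-specific kernels $\PKernel^{\pi_k}$, $\PKernel^{\pi^*}$, and an explicit explanation of why the $1/(1-3c_1\infnorm{\epsmodel})$ amplification is absent (the quantity bounded is a gap between optimal value functions, not a policy suboptimality --- the paper's proof also avoids it, but less transparently). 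What the paper's route buys is a two-sided pointwise sandwich (slightly more information than a sup-norm bound) and direct reuse of Lemma~\ref{lemma:G-supnorm-approx}, keeping the control analysis uniform with the rest of the appendix (cf.~\eqref{eq:control-by-g-infnorm}). Your bookkeeping is also sound: $\epsquery \le \epsquery^\infty$ pointwise via $\norm{\cdot}_2 \le \sqrt{d}\,\norm{\cdot}_\infty$, and the constant checks $e_1 \le 3c_1\infnorm{\epsmodel}$ (since $2+\sqrt{2}\le 3\sqrt{2}$) and $e_2 \le c_2\infnorm{\epsquery^\infty}$ go through exactly as you state.
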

\begin{proof}
	Let $\PKernelbar_k$ be the corrected transition dynamics used to obtain $V_k$.
Let $r_k = r + (\gamma \PKernel - \gamma \PKernelbar_k) V^*$. According to \cite{rakhsha2022operator}, $V^* = V^*(r_k, \PKernelbar_k) = V^{\pi^*}(r_k, \PKernelbar_k)$. Now we have
\begin{align*}
V^* - V_k 
&= V^*(r_k, \PKernelbar_k) - V^{\pi_k}(r, \PKernelbar_k)\\
&\vecge V^{\pi_k}(r_k, \PKernelbar_k) - V^{\pi_k}(r, \PKernelbar_k)\\
&=(\ident - \gamma \PKernelbar^{\pi_k})^{-1}(r_k^{\pi_k} - r^{\pi_k})\\
&=(\ident - \gamma \PKernelbar^{\pi_k})^{-1}(\gamma \PKernel^{\pi_k} - \gamma \PKernelbar^{\pi_k})V^*\\
&=G^{\pi_k}_{\PKernel, \PKernelbar_k}V^*.
\end{align*} 
On the other hand
\begin{align*}
V^* - V_k 
&= V^{\pi^*}(r_k, \PKernelbar_k) - V^*(r, \PKernelbar_k)\\
&\vecle V^{\pi^*}(r_k, \PKernelbar_k) - V^{\pi^*}(r, \PKernelbar_k)\\
&=(\ident - \gamma \PKernelbar^{\pi^*})^{-1}(r_k^{\pi^*} - r^{\pi^*})\\
&=(\ident - \gamma \PKernelbar^{\pi^*})^{-1}(\gamma \PKernel^{\pi^*} - \gamma \PKernelbar^{\pi^*})V^*\\
&=G^{\pi^*}_{\PKernel, \PKernelbar_k}V^*.
\end{align*}
Thus,
\begin{align*}
\infnorm{V^* - V_k} &\le \max \left(\infnorm{G^{\pi_k}_{\PKernel, \PKernelbar_k}V^*} ,  \infnorm{G^{\pi^*}_{\PKernel, \PKernelbar_k} V^*} \right)\\
&\le 3c_1\infnorm{\epsmodel} \inf_{\infnorm{w} \le \wmax}\infnorm{V^* - \sum w_i \phi_{k+i}} + c_2 \infnorm{\epsquery^\infty} \wmax,
\end{align*}
where the last inequality is from Theorem~\ref{theorem:optimal-beta-infnorm}.
\end{proof}

\textbf{Proof of Theorem~\ref{theorem:mocovi-supnorm}}

\begin{proof}

For PE, we note that from Theorem~\ref{theorem:optimal-beta-infnorm} we have for any $K \le k\ge1$
\begin{align*}
\infnorm{V^\piz - V_k} 
&\le 3c_1\infnorm{\epsmodel} \inf_{\infnorm{w} \le \wmax}\infnorm{V^\piz - \sum w_i \phi_{k+i}} + c_2 \infnorm{\epsquery^\infty} \wmax\\
&\le \gamma' \infnorm{V^\piz - V_{k-1}} + c_2 \infnorm{\epsquery^\infty} \wmax.
\end{align*}
By induction, we get
\begin{align*}
\infnorm{V^\piz - V_K} 
&\le \gamma'^K \infnorm{V^\piz - V_{0}} + \frac{1 - \gamma'^K}{1 - \gamma'}c_2 \infnorm{\epsquery^\infty} \wmax.
\end{align*}

For control, note that according to Lemma~\ref{lemma:moco-control-oneiter}, for $1 \le k \le K$
\begin{align*}
\infnorm{V^* - V_k} 
&\le 3c_1\infnorm{\epsmodel} \inf_{\infnorm{w} \le \wmax}\infnorm{V^*- \sum w_i \phi_{k+i}} + c_2 \infnorm{\epsquery^\infty} \wmax\\
&\le \gamma' \infnorm{V^* - V_{k-1}} + c_2 \infnorm{\epsquery^\infty} \wmax.
\end{align*}
Consequently
\begin{align*}
\infnorm{V^* - V_{K-1}} 
&\le \gamma'^{K-1} \infnorm{V^* - V_{0}} + \frac{1 - \gamma'^{K-1}}{1 - \gamma'}c_2 \infnorm{\epsquery^\infty} \wmax.
\end{align*}
Finally, based on Theorem~\ref{theorem:optimal-beta-infnorm},
\begin{align*}
\infnorm{V^* - V^{\pi_K}} 
&\le
\frac{6c_1 \norm{\epsmodel}_\infty} {1 - 3c_1 \norm{\epsmodel}_\infty} \inf_{ \infnorm{w} \le \wmax} \Biginfnorm{ V^* -  \sum_{i = 1}^d w_i \phi_{i+K} } \\
&\quad + \frac{2c_2\norm{\epsquery}_\infty}{1 - 3c_1\norm{\epsmodel}_\infty} \cdot \wmax\\
&\le
\frac{2} {1 - 3c_1 \norm{\epsmodel}_\infty} \gamma'  \Biginfnorm{ V^* -  V_{K-1} } + \frac{2c_2\norm{\epsquery}_\infty}{1 - 3c_1\norm{\epsmodel}_\infty} \cdot \wmax\\
&\le
\frac{2\gamma'^K} {1 - 3c_1 \norm{\epsmodel}_\infty}   \Biginfnorm{ V^* -  V_{0} } + \frac{1  - \gamma'^{K}}{1 - \gamma'}  \frac{2c_2\infnorm{\epsquery}}{1 - 3c_1\infnorm{\epsmodel}} \wmax.
\end{align*}

\end{proof}
	
	\section{Additional Empirical Details}
\label{sec:Appendix-Exp}

\begin{figure}[h!]
	\centering
	\includegraphics[width=0.4\linewidth]{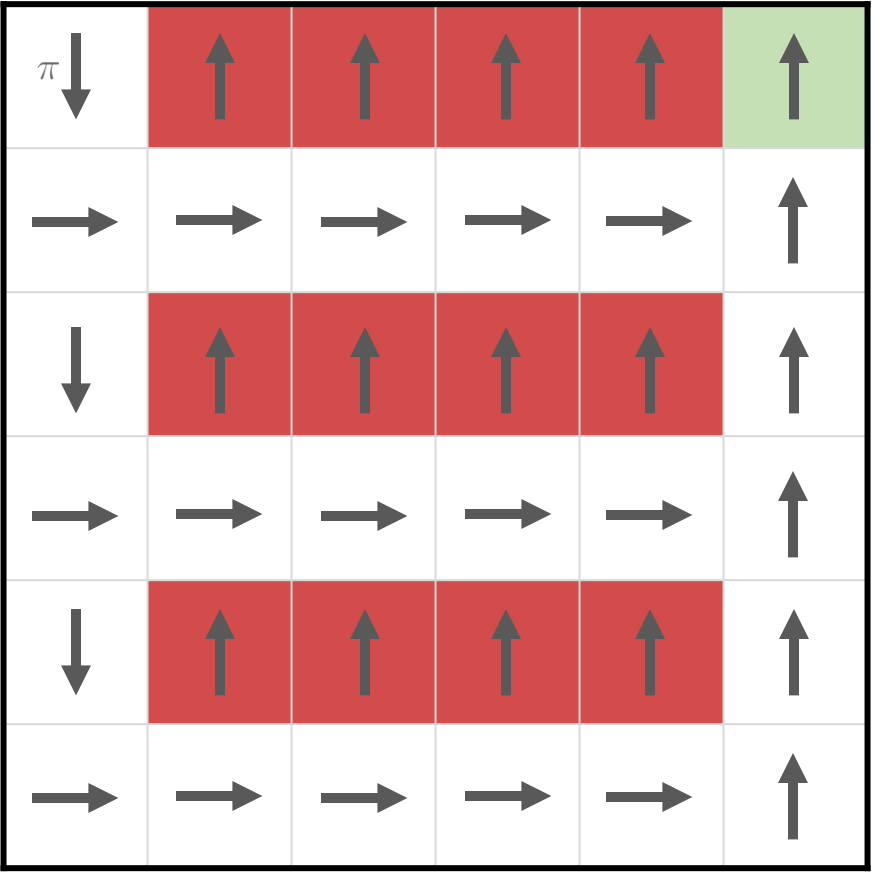}
	\caption{Modified Cliffwalk environment \citep{rakhsha2022operator}.}
	\label{fig:cliffwalk}
\end{figure}

\begin{figure}[h!]
	\centering
	\includegraphics[width=0.9\linewidth]{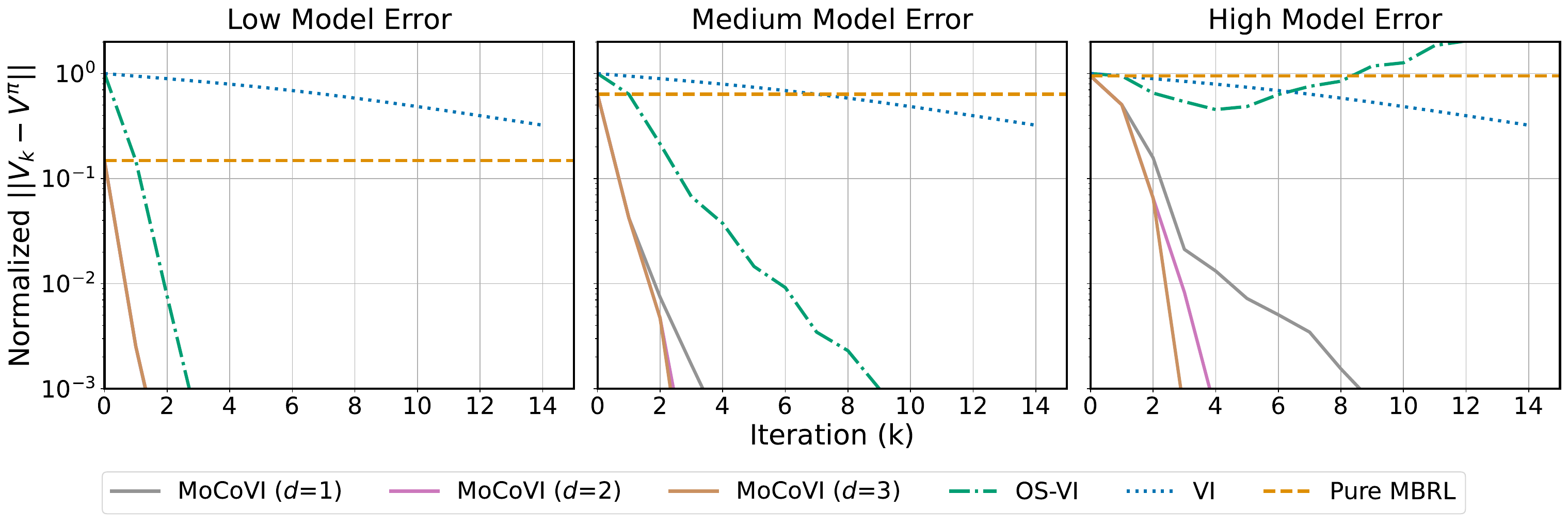}
	\caption{Policy evaluation results comparing MoCoVI with VI, pure MBRL and OS-VI. \textit{(Left)} low ($\lambda=0.1$),   \textit{(Middle)} medium ($\lambda=0.5$), and  \textit{(Right)} high ($\lambda=1$) model errors. Each curve is average of 20 runs. Shaded areas show the standard error.}
	\label{fig:additional1}
\end{figure}

\begin{figure}[h!]
	\centering
	\includegraphics[width=0.9\linewidth]{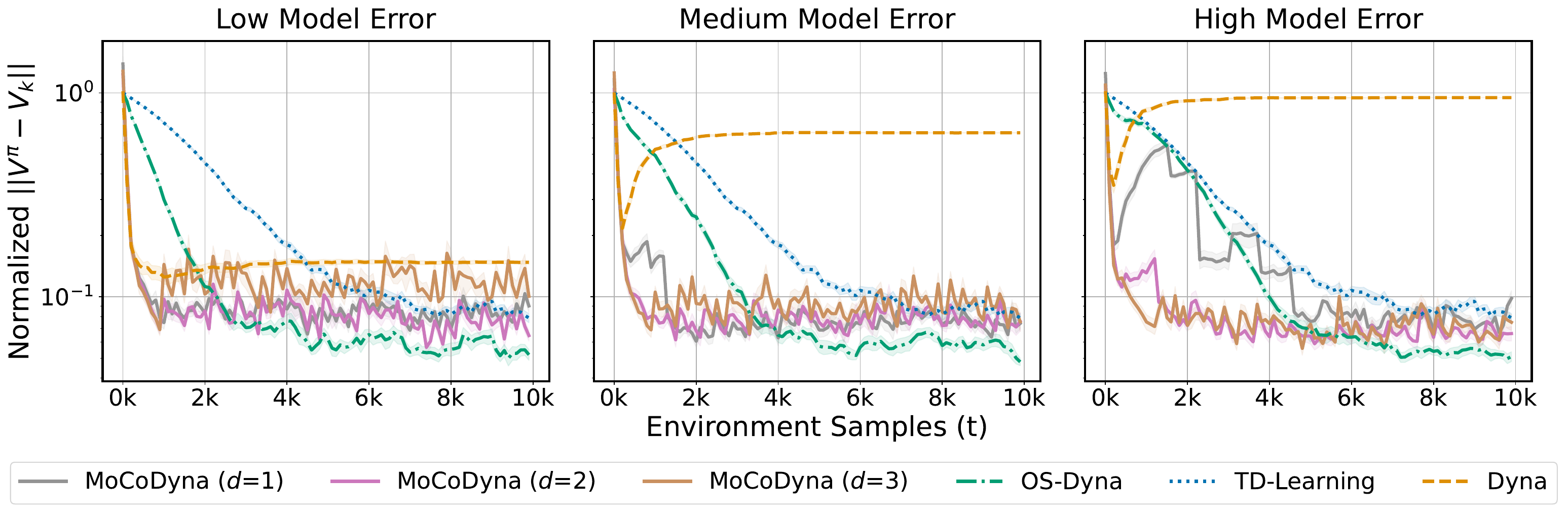}
	\caption{Policy evaluation results comparing MoCoDyna with Dyna, OS-Dyna and TD-learning. \textit{(Left)} low ($\lambda=0.1$),   \textit{(Middle)} medium ($\lambda=0.5$), and  \textit{(Right)} high ($\lambda=1$) model errors. Each curve is average of 20 runs. Shaded areas show the standard error.}
	\label{fig:additional2}
\end{figure}

We perform our experiments on a $6\times6$ gridworld environment introduced by \cite{rakhsha2022operator}. The environment is shown in Figure~\ref{fig:cliffwalk}. There are 4 actions in the environment: (UP, RIGHT, DOWN, LEFT). When an action is taken, the agent moves towards that direction with probability $0.9$. With probability of $0.1$ it moves towards another direction at random. If the agent attempts to exit the environment, it stays in place. The middle 4 states of the first, third, and fifth row are \textit{cliffs}. If the agent falls into a cliff, it stays there permanently and receives reward of $-32$, $-16$, $-8$ every iterations for the first, third, and fifth row cliffs, respectively. The top-right corner is the goal state, which awards reward of $20$ once reached. We consider this environment with $\gamma = 0.9$.

For MoCoVI, we set the initial basis functions $\phi_i$ for $i = 1, \cdots, d$ constant zero functions. We can set $\psi_i = 0$ for $i = 1, \cdots, d$ without querying $\PKernel$. This makes the comparison of algorithms fair as MoCoVI is not given extra queries before the first iteration. The convergence of MoCoVI with exact queries and $\beta = 0$ is shown in Figures~\ref{fig:control-sample} and \ref{fig:additional1} for the control and PE problems.

Figures~\ref{fig:control-sample} and ~\ref{fig:additional2} show the performance of MoCoDyna compared to other algorithms in the PE and control problems. As discussed after Theorem~\ref{theorem:optimal-beta-infnorm}, it is beneficial to choose basis functions such that the true value function can be approximated with $\sum_i w_i\phi_i$ for some small weights $w_i$. To achieve this in our implementation, we initialize $\phi_{1:d+c}$ with an orthonormal set of functions. Also, in line~\ref{algline_add_basis} of Algorithm~\ref{alg:mocodyna-r}, we maintain this property of basis functions by subtracting the projection of the new value function $V_t$ onto the span of the previous $d-1$ functions before adding it to the basis functions. We have
\begin{align}
    \phi_{d+c} \gets V_t - \sum_{i = c + 1}^{d+c-1} \inner{\phi_i, V_t} \cdot \phi_i,
\end{align}
and then we normalize $\phi_{d+c}$ to have a fixed euclidean norm. The hyperparameters of MoCoDyna for PE and control problems are given in Tables~\ref{tab:hyperparams-pe} and \ref{tab:hyperparams-control}.

\paragraph{Model Error Reduction.}
To show that the model correction procedure in MoCoDyna improves the accuracy of the model, we plot the error of original and corrected dynamics in the control problem in Figure~\ref{fig:model_error}. The model error is measured by taking the average of $\smallnorm{\PKernel(\cdot|x,a) - \PKernelhat(\cdot | x,a)}_1$ or $\smallnorm{\PKernel(\cdot|x,a) - \PKernelbar(\cdot | x,a)}_1$ over all $x,a$. We observe that higher order correction better reduces the error.

\paragraph{Computation Cost.}  In Table~\ref{tab:opt-time} we provide the average time the calculation of $\PKernelbar$ has taken in MoCoDyna in the control problem. This is total time to calculate $\PKernelbar(\cdot | x,a)$ for all $144$ state-action pairs in the environment. In our implementation, the dual variables of the optimization problem for all state-action pairs are optimized with a single instance of the BFGS algorithm in SciPy library. Note that in general, different instances of the optimization problem \eqref{eq:reg-gen-model-correction.primal} for a batch of state-action pairs can be solved in parallel to reduce the computation time. Table~\ref{tab:run-time} shows the full run time of the algorithms. It is important to note that in Algorithm~\ref{alg:mocodyna-r}, apart from reporting the current policy for the purpose of evaluation in line \ref{algline_plan}, MoCoDyna only needs to plan with $\PKernelbar$ every $K$ steps to have $V_t$ in line~\ref{algline_add_basis}. In our implementation, planning is done every 2000 steps to evaluate the algorithm. Performing the planning only when needed in line~\ref{algline_add_basis} would make the algorithm computationally faster.

\begin{figure}[h!]
	\centering
	\includegraphics[width=0.9\linewidth]{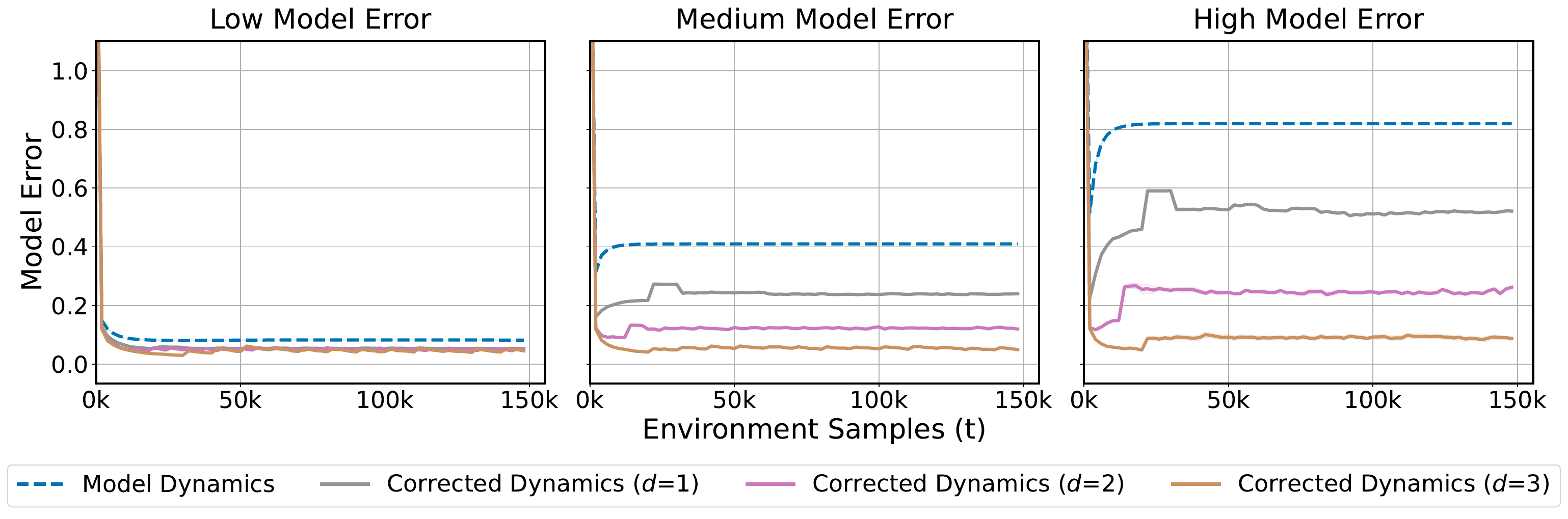}
	\caption{Comparison of the error of the original uncorrected model compared to error of corrected dynamics in the PE problem. \textit{(Left)} low ($\lambda=0.1$),   \textit{(Middle)} medium ($\lambda=0.5$), and  \textit{(Right)} high ($\lambda=1$) model errors. Each curve is average of 10 runs. Shaded areas show the standard error.}
	\label{fig:model_error}
\end{figure}

\begin{table}[h]
\centering
\caption{Average computation time (seconds) of $\PKernelbar$ during a run of algorithms in the control problem for low ($\lambda = 0.1$), medium ($\lambda=0.5$), and high ($\lambda=1$) model errors.}
\label{tab:opt-time}
\begin{tabular}{|l|c|c|c|}
\hline
                       & MoCoDyna1  & MoCoDyna2   & MoCoDyna3 \\ \hline
$\lambda=0.1$ &  $0.24$ & $0.58$ & $1.51$ \\ \hline
$\lambda=0.5$ &  $0.29$ & $0.52$ & $1.39$ \\ \hline
$\lambda=1$ & $0.2$ & $0.5$ & $1.44$ \\ \hline
\end{tabular}
\end{table}

\begin{table}[h]
\centering
\caption{Run time (seconds) for a single run of algorithms in the control problem for low ($\lambda = 0.1$), medium ($\lambda=0.5$), and high ($\lambda=1$) model errors.}
\label{tab:run-time}
\begin{tabular}{|l|c|c|c|c|c|c|}
\hline
                      & TD Learning  & Dyna   & OS-Dyna    & MoCoDyna1  & MoCoDyna2   & MoCoDyna3 \\ \hline
$\lambda=0.1$ & $44$ & $50$ & $555$ & $119$ & $134$ & $200$ \\ \hline
$\lambda=0.5$ & $44$ & $34$ & $565$ & $113$ & $114$ & $169$ \\ \hline
$\lambda=1$ & $44$ & $33$ & $600$ & $91$ & $110$ & $172$ \\ \hline
\end{tabular}
\end{table}

\begin{table}[H]
\centering
\caption{Hyperparamters for the PE problem. Cells with multiple values provide the value of the hyperparameter for different model errors with $\lambda=0.1$, $\lambda=0.5$, and $\lambda=1$, respectively.}
\label{tab:hyperparams-pe}
\begin{tabular}{|l|c|c|c|c|c|}
\hline
                      & TD Learning     & OS-Dyna    & MoCoDyna1  & MoCoDyna2   & MoCoDyna3 \\ \hline
learning rate & $0.2$ & $0.05, 0.05, 0.05$ & - & - & - \\ \hline
$c$               & - & - & $2, 2, 2$    & $2, 2, 2$     & $2, 2, 2$    \\ \hline
$\beta$                    & -              & -              & $0.02, 0.02, 0.02$              & $0.16, 0.16, 0.16$                 & $0.14, 0.14, 0.14$      \\ \hline
$K$                    & -              & -              & $250, 400, 750$                & $300, 300, 400$                & $300, 300, 400$   \\ \hline
\end{tabular}
\end{table}

\begin{table}[H]
\centering
\caption{Hyperparamters for the control problem. Cells with multiple values provide the value of the hyperparameter for different model errors with $\lambda=0.1$, $\lambda=0.5$, and $\lambda=1$, respectively.}
\label{tab:hyperparams-control}
\begin{tabular}{|l|c|c|c|c|c|}
\hline
                      & TD Learning     & OS-Dyna    & MoCoDyna1  & MoCoDyna2   & MoCoDyna3 \\ \hline
learning rate & $0.2$ & $0.02, 0.02, 0.02$ & - & - & - \\ \hline
$c$               & - & - & $2, 2, 2$    & $2, 2, 2$     & $2, 2, 2$    \\ \hline
$\beta$                    & -              & -              & $0.02, 0.02, 0.02$              & $0.16, 0.16, 0.16$                 & $0.14, 0.14, 0.14$      \\ \hline
$K$                    & -              & -              & $10k, 10k, 10k$              & $6k, 6k, 6k$                & $10k, 10k, 10k$   \\ \hline
\end{tabular}
\end{table}
	
	\fi
	

\begin{thebibliography}{46}
\providecommand{\natexlab}[1]{#1}
\providecommand{\url}[1]{\texttt{#1}}
\expandafter\ifx\csname urlstyle\endcsname\relax
  \providecommand{\doi}[1]{doi: #1}\else
  \providecommand{\doi}{doi: \begingroup \urlstyle{rm}\Url}\fi

\bibitem[Abachi et~al.(2020)Abachi, Ghavamzadeh, and
  Farahmand]{AbachiGhavamzadehFarahmand2020}
Romina Abachi, Mohammad Ghavamzadeh, and Amir-massoud Farahmand.
\newblock Policy-aware model learning for policy gradient methods.
\newblock \emph{arXiv:2003.00030v2}, 2020.

\bibitem[Abachi et~al.(2022)Abachi, Voelcker, Garg, and
  Farahmand]{abachi2022viper}
Romina Abachi, Claas~A. Voelcker, Animesh Garg, and Amir-massoud Farahmand.
\newblock {VIP}er: Iterative value-aware model learning on the value
  improvement path.
\newblock In \emph{Decision Awareness in Reinforcement Learning Workshop at
  ICML 2022}, 2022.

\bibitem[Abbas et~al.(2020)Abbas, Sokota, Talvitie, and White]{abbas2020}
Zaheer Abbas, Samuel Sokota, Erin Talvitie, and Martha White.
\newblock Selective dyna-style planning under limited model capacity.
\newblock In Hal~Daum{\'e} III and Aarti Singh, editors, \emph{Proceedings of
  the 37th International Conference on Machine Learning}, volume 119 of
  \emph{Proceedings of Machine Learning Research}, pages 1--10. PMLR, 13--18
  Jul 2020.

\bibitem[Altun and Smola(2006)]{AltunSmola2006}
Yasemin Altun and Alex Smola.
\newblock Unifying divergence minimization and statistical inference via convex
  duality.
\newblock In \emph{Proceedings of the 19th Annual Conference on Learning
  Theory}, COLT'06, pages 139--153. Springer-Verlag, 2006.
\newblock ISBN 3-540-35294-5, 978-3-540-35294-5.
\newblock \doi{10.1007/11776420_13}.

\bibitem[\'Avila~Pires and Szepesv\'ari(2016)]{AvilaPiresSzepesvari2016}
Bernardo \'Avila~Pires and {\relax Cs}aba Szepesv\'ari.
\newblock Policy error bounds for model-based reinforcement learning with
  factored linear models.
\newblock In \emph{Conference on Learning Theory (COLT)}, 2016.

\bibitem[Bertsekas(2009)]{bertsekas2009convex}
D.~Bertsekas.
\newblock \emph{Convex Optimization Theory}.
\newblock Athena Scientific optimization and computation series. Athena
  Scientific, 2009.
\newblock ISBN 9781886529311.

\bibitem[Bertsekas and Tsitsiklis(1996)]{Bertsekas96}
Dimitri~P. Bertsekas and John~N. Tsitsiklis.
\newblock \emph{Neuro-Dynamic Programming}.
\newblock {Athena Scientific}, 1996.

\bibitem[Borwein and Lewis(1991)]{borwein1991duality}
Jonathan~M Borwein and Adrian~S Lewis.
\newblock Duality relationships for entropy-like minimization problems.
\newblock \emph{SIAM Journal on Control and Optimization}, 29\penalty0
  (2):\penalty0 325--338, 1991.

\bibitem[Botev and Kroese(2011)]{botev2011generalized}
Zdravko~I Botev and Dirk~P Kroese.
\newblock The generalized cross entropy method, with applications to
  probability density estimation.
\newblock \emph{Methodology and Computing in Applied Probability}, 13:\penalty0
  1--27, 2011.

\bibitem[Brooks et~al.(2011)Brooks, Gelman, Jones, and
  Meng]{brooks2011handbook}
S.~Brooks, A.~Gelman, G.~Jones, and X.L. Meng.
\newblock \emph{Handbook of Markov Chain Monte Carlo}.
\newblock Chapman \& Hall/CRC Handbooks of Modern Statistical Methods. CRC
  Press, 2011.
\newblock ISBN 9781420079425.

\bibitem[Chen and Rosenfeld(2000{\natexlab{a}})]{chen2000}
S.F. Chen and R.~Rosenfeld.
\newblock A survey of smoothing techniques for me models.
\newblock \emph{IEEE Transactions on Speech and Audio Processing}, 8\penalty0
  (1):\penalty0 37--50, 2000{\natexlab{a}}.
\newblock \doi{10.1109/89.817452}.

\bibitem[Chen and Rosenfeld(2000{\natexlab{b}})]{chen2000survey}
Stanley~F Chen and Ronald Rosenfeld.
\newblock A survey of smoothing techniques for me models.
\newblock \emph{IEEE transactions on Speech and Audio Processing}, 8\penalty0
  (1):\penalty0 37--50, 2000{\natexlab{b}}.

\bibitem[Cover and Thomas(2006)]{Cover2006}
Thomas~M. Cover and Joy~A. Thomas.
\newblock \emph{Elements of Information Theory 2nd Edition (Wiley Series in
  Telecommunications and Signal Processing)}.
\newblock Wiley-Interscience, July 2006.
\newblock ISBN 0471241954.

\bibitem[Dabney et~al.(2021)Dabney, Barreto, Rowland, Dadashi, Quan, Bellemare,
  and Silver]{dabney2021value}
Will Dabney, Andr{\'e} Barreto, Mark Rowland, Robert Dadashi, John Quan, Marc~G
  Bellemare, and David Silver.
\newblock The value-improvement path: Towards better representations for
  reinforcement learning.
\newblock In \emph{Proceedings of the AAAI Conference on Artificial
  Intelligence}, volume~35, pages 7160--7168, 2021.

\bibitem[Darroch and Ratcliff(1972)]{darroch1972generalized}
John~N Darroch and Douglas Ratcliff.
\newblock Generalized iterative scaling for log-linear models.
\newblock \emph{The annals of mathematical statistics}, pages 1470--1480, 1972.

\bibitem[Decarreau et~al.(1992)Decarreau, Hilhorst, Lemar{\'e}chal, and
  Navaza]{decarreau1992dual}
Andr{\'e}e Decarreau, Danielle Hilhorst, Claude Lemar{\'e}chal, and Jorge
  Navaza.
\newblock Dual methods in entropy maximization. application to some problems in
  crystallography.
\newblock \emph{SIAM Journal on Optimization}, 2\penalty0 (2):\penalty0
  173--197, 1992.

\bibitem[Della~Pietra et~al.(1997)Della~Pietra, Della~Pietra, and
  Lafferty]{Pietra1997}
S.~Della~Pietra, V.~Della~Pietra, and J.~Lafferty.
\newblock Inducing features of random fields.
\newblock \emph{IEEE Transactions on Pattern Analysis and Machine
  Intelligence}, 19\penalty0 (4):\penalty0 380--393, 1997.
\newblock \doi{10.1109/34.588021}.

\bibitem[Dud{\'\i}k et~al.(2004{\natexlab{a}})Dud{\'\i}k, Phillips, and
  Schapire]{DudikPhillipsSchapire2004}
Miroslav Dud{\'\i}k, Steven~J. Phillips, and Robert~E. Schapire.
\newblock Performance guarantees for regularized maximum entropy density
  estimation.
\newblock In John Shawe-Taylor and Yoram Singer, editors, \emph{Proceedings of
  the 17th Annual Conference on Computational Learning Theory}, volume 3120 of
  \emph{Lecture Notes in Computer Science}, pages 472--486. Springer Berlin
  Heidelberg, 2004{\natexlab{a}}.
\newblock ISBN 978-3-540-22282-8.

\bibitem[Dud{\'\i}k et~al.(2004{\natexlab{b}})Dud{\'\i}k, Phillips, and
  Schapire]{dudik2004}
Miroslav Dud{\'\i}k, Steven~J. Phillips, and Robert~E. Schapire.
\newblock Performance guarantees for regularized maximum entropy density
  estimation.
\newblock In John Shawe-Taylor and Yoram Singer, editors, \emph{Learning
  Theory}, pages 472--486, Berlin, Heidelberg, 2004{\natexlab{b}}. Springer
  Berlin Heidelberg.
\newblock ISBN 978-3-540-27819-1.

\bibitem[Dud{\'\i}k et~al.(2007)Dud{\'\i}k, Phillips, and
  Schapire]{dudik2007maximum}
Miroslav Dud{\'\i}k, Steven~J Phillips, and Robert~E Schapire.
\newblock Maximum entropy density estimation with generalized regularization
  and an application to species distribution modeling.
\newblock 2007.

\bibitem[Farahmand(2018)]{Farahmand2018}
Amir-massoud Farahmand.
\newblock Iterative value-aware model learning.
\newblock In \emph{Advances in Neural Information Processing Systems (NeurIPS -
  31)}, pages 9072--9083, 2018.

\bibitem[Farahmand et~al.(2010)Farahmand, Munos, and
  Szepesv\'ari]{FarahmandMunosSzepesvari10}
Amir-massoud Farahmand, R\'{e}mi Munos, and {\relax Cs}aba Szepesv\'ari.
\newblock Error propagation for approximate policy and value iteration.
\newblock In J.~Lafferty, C.~K.~I. Williams, J.~Shawe-Taylor, R.~S. Zemel, and
  A.~Culotta, editors, \emph{Advances in Neural Information Processing Systems
  (NeurIPS - 23)}, pages 568--576. 2010.

\bibitem[Farahmand et~al.(2017)Farahmand, Barreto, and
  Nikovski]{FarahmandVAML2017}
Amir-massoud Farahmand, Andr\'e~M.S. Barreto, and Daniel~N. Nikovski.
\newblock Value-aware loss function for model-based reinforcement learning.
\newblock In \emph{Proceedings of the 20th International Conference on
  Artificial Intelligence and Statistics (AISTATS)}, pages 1486--1494, April
  2017.

\bibitem[Goodman(2004)]{goodman-2004-exponential}
Joshua Goodman.
\newblock Exponential priors for maximum entropy models.
\newblock In \emph{Proceedings of the Human Language Technology Conference of
  the North {A}merican Chapter of the Association for Computational
  Linguistics: {HLT}-{NAACL} 2004}, pages 305--312. Association for
  Computational Linguistics, 2004.

\bibitem[Jafferjee et~al.(2020)Jafferjee, Imani, Talvitie, White, and
  Bowling]{jafferjee2020hallucinating}
Taher Jafferjee, Ehsan Imani, Erin Talvitie, Martha White, and Micheal Bowling.
\newblock Hallucinating value: A pitfall of dyna-style planning with imperfect
  environment models.
\newblock \emph{arXiv preprint arXiv:2006.04363}, 2020.

\bibitem[Janner et~al.(2019)Janner, Fu, Zhang, and Levine]{janner2019}
Michael Janner, Justin Fu, Marvin Zhang, and Sergey Levine.
\newblock When to trust your model: Model-based policy optimization.
\newblock In \emph{Advances in Neural Information Processing Systems},
  volume~32. Curran Associates, Inc., 2019.

\bibitem[Jaynes(1957)]{Jaynes1957}
E.~T. Jaynes.
\newblock Information theory and statistical mechanics.
\newblock \emph{Phys. Rev.}, 106:\penalty0 620--630, May 1957.
\newblock \doi{10.1103/PhysRev.106.620}.

\bibitem[Kakade and Langford(2002)]{KakadeLangfordCPI2002}
Sham Kakade and John Langford.
\newblock Approximately optimal approximate reinforcement learning.
\newblock In \emph{Proceedings of the Nineteenth International Conference on
  Machine Learning (ICML)}, pages 267--274, 2002.

\bibitem[Kapur and Kesavan(1992)]{Kapur1992}
J.~N. Kapur and H.~K. Kesavan.
\newblock \emph{Entropy Optimization Principles and Their Applications}, pages
  3--20.
\newblock Springer Netherlands, Dordrecht, 1992.
\newblock ISBN 978-94-011-2430-0.
\newblock \doi{10.1007/978-94-011-2430-0_1}.

\bibitem[Kazama and Tsujii(2003)]{kazama-tsujii-2003-evaluation}
Jun{'}ichi Kazama and Jun{'}ichi Tsujii.
\newblock Evaluation and extension of maximum entropy models with inequality
  constraints.
\newblock In \emph{Proceedings of the 2003 Conference on Empirical Methods in
  Natural Language Processing}, pages 137--144, 2003.

\bibitem[Kullback(1959)]{kullback1959information}
S.~Kullback.
\newblock \emph{Information Theory and Statistics}.
\newblock Wiley publication in mathematical statistics. Wiley, 1959.

\bibitem[Lau(1994)]{lau1994adaptive}
Raymond Lau.
\newblock \emph{Adaptive statistical language modeling}.
\newblock PhD thesis, Massachusetts Institute of Technology, 1994.

\bibitem[Lebanon and Lafferty(2001)]{lebanon2001}
Guy Lebanon and John Lafferty.
\newblock Boosting and maximum likelihood for exponential models.
\newblock In T.~Dietterich, S.~Becker, and Z.~Ghahramani, editors,
  \emph{Advances in Neural Information Processing Systems}, volume~14. MIT
  Press, 2001.

\bibitem[Lovatto et~al.(2020)Lovatto, Bueno, Mau\'{a}, and
  de~Barros]{pmlr-v137-lovatto20a}
\^{A}ngelo~G. Lovatto, Thiago~P. Bueno, Denis~D. Mau\'{a}, and Leliane~N.
  de~Barros.
\newblock Decision-aware model learning for actor-critic methods: When theory
  does not meet practice.
\newblock In Jessica Zosa~Forde, Francisco Ruiz, Melanie~F. Pradier, and Aaron
  Schein, editors, \emph{Proceedings on "I Can't Believe It's Not Better!" at
  NeurIPS Workshops}, volume 137 of \emph{Proceedings of Machine Learning
  Research}, pages 76--86. PMLR, 12 Dec 2020.

\bibitem[Malouf(2002)]{malouf-2002-comparison}
Robert Malouf.
\newblock A comparison of algorithms for maximum entropy parameter estimation.
\newblock In \emph{{COLING}-02: The 6th Conference on Natural Language Learning
  2002 ({C}o{NLL}-2002)}, 2002.

\bibitem[Munos(2003)]{Munos03}
R{\'e}mi Munos.
\newblock Error bounds for approximate policy iteration.
\newblock In \emph{Proceedings of the 20th International Conference on Machine
  Learning (ICML)}, pages 560--567, 2003.

\bibitem[Munos(2007)]{Munos07}
R{\'e}mi Munos.
\newblock Performance bounds in ${L}_p$ norm for approximate value iteration.
\newblock \emph{{SIAM} Journal on Control and Optimization}, pages 541--561,
  2007.

\bibitem[Rakhsha et~al.(2022)Rakhsha, Wang, Ghavamzadeh, and
  Farahmand]{rakhsha2022operator}
Amin Rakhsha, Andrew Wang, Mohammad Ghavamzadeh, and Amir-massoud Farahmand.
\newblock Operator splitting value iteration.
\newblock \emph{Advances in Neural Information Processing Systems},
  35:\penalty0 38373--38385, 2022.

\bibitem[Scherrer et~al.(2015)Scherrer, Ghavamzadeh, Gabillon, Lesner, and
  Geist]{ScherrerGhavamzadehGabillonetal2015}
Bruno Scherrer, Mohammad Ghavamzadeh, Victor Gabillon, Boris Lesner, and
  Matthieu Geist.
\newblock Approximate modified policy iteration and its application to the game
  of tetris.
\newblock \emph{Journal of Machine Learning Research (JMLR)}, 16\penalty0
  (49):\penalty0 1629--1676, 2015.

\bibitem[Shore and Johnson(1980)]{shore1980}
J.~Shore and R.~Johnson.
\newblock Axiomatic derivation of the principle of maximum entropy and the
  principle of minimum cross-entropy.
\newblock \emph{IEEE Transactions on Information Theory}, 26\penalty0
  (1):\penalty0 26--37, 1980.
\newblock \doi{10.1109/TIT.1980.1056144}.

\bibitem[Sutton(1990)]{Sutton1990}
Richard~S. Sutton.
\newblock Integrated architectures for learning, planning, and reacting based
  on approximating dynamic programming.
\newblock In \emph{Proceedings of the 7th International Conference on Machine
  Learning (ICML)}, 1990.

\bibitem[Sutton and Barto(2019)]{SuttonBarto2018}
Richard~S. Sutton and Andrew~G. Barto.
\newblock \emph{Reinforcement Learning: An Introduction}.
\newblock {The MIT Press}, second edition, 2019.

\bibitem[Szepesv\'ari(2010)]{SzepesvariBook10}
{\relax Cs}aba Szepesv\'ari.
\newblock \emph{Algorithms for Reinforcement Learning}.
\newblock Morgan Claypool Publishers, 2010.

\bibitem[Talvitie(2017)]{Talvite2017}
Erin~J. Talvitie.
\newblock Self-correcting models for model-based reinforcement learning.
\newblock In \emph{Proceedings of the Thirty-First {AAAI} Conference on
  Artificial Intelligence}, pages 2597--2603, 2017.

\bibitem[Voelcker et~al.(2022)Voelcker, Liao, Garg, and
  Farahmand]{VoelckerLiaoGargGarahmand2022}
Claas~A. Voelcker, Victor Liao, Animesh Garg, and Amir-massoud Farahmand.
\newblock Value gradient weighted model-based reinforcement learning.
\newblock In \emph{International Conference on Learning Representations
  (ICLR)}, 2022.

\bibitem[Zhang(2004)]{zhang2004class}
Tong Zhang.
\newblock Class-size independent generalization analsysis of some
  discriminative multi-category classification.
\newblock \emph{Advances in Neural Information Processing Systems}, 17, 2004.

\end{thebibliography}
\end{document}